\newcommand{\LS}{\text{Location Subset}}
\newcommand{\Ehat}{\hat{E}}
\newcommand{\dec}{\textrm{dec}}
\newcommand{\decbar}{\overline{\dec}}
\newcommand{\MSE}{\textrm{MSE}}
\newcommand{\K}{\mathcal{K}}
\newcommand{\Khat}{\hat{\K}}
\newcommand{\Itrain}{I_{\textrm{tr}}}
\newcommand{\Jtrain}{J_{\textrm{tr}}}
\newcommand{\etal}{\textit{et al.}}
\newcommand{\D}{\mathcal{D}}
\newcommand{\Dtrain}{\mathcal{D}_\textrm{tr}}
\DeclareMathOperator{\Tr}{tr}
\begin{document}
	\title{Towards High Performance Low Complexity Calibration in Appearance Based Gaze Estimation}

	\author{Zhaokang Chen and
		Bertram E. Shi,~\IEEEmembership{Fellow,~IEEE}
		\IEEEcompsocitemizethanks{\IEEEcompsocthanksitem Z. Chen and B.E. Shi are with the Department of Electronic and Computer Engineering, The Hong Kong University of Science and Technology, Clear Water Bay, Hong Kong. This work was supported in part by the Hong Kong Research Grants Council under grant number 16213617. \protect\\
			E-mail: zchenbc@connect.ust.hk, eebert@ust.hk}
		\thanks{}}

	\markboth{}%
	{Z. Chen and B.E. Shi: Dilation and Decomposition for Gaze Estimation}

	\IEEEtitleabstractindextext{%
		\begin{abstract}
			Appearance-based gaze estimation from RGB images provides relatively unconstrained gaze tracking from commonly available hardware. The accuracy of subject-independent models is limited partly by small intra-subject and large inter-subject variations in appearance, and partly by a latent subject-dependent bias. To improve estimation accuracy, we have previously proposed a gaze decomposition method that decomposes the gaze angle into the sum of a subject-independent gaze estimate from the image and a subject-dependent bias. Estimating the bias from images outperforms previously proposed calibration algorithms, unless the amount of calibration data is prohibitively large. This paper extends that work with a more complete characterization of the interplay between the complexity of the calibration dataset and estimation accuracy. In particular, we analyze the effect of the number of gaze targets, the number of images used per gaze target and the number of head positions in calibration data using a new NISLGaze dataset, which is well suited for analyzing these effects as it includes more diversity in head positions and orientations for each subject than other datasets. A better understanding of these factors enables low complexity high performance calibration. Our results indicate that using only a single gaze target and single head position is sufficient to achieve high quality calibration. However, it is useful to include variability in head orientation as the subject is gazing at the target. Our proposed estimator based on these studies (GEDDNet) outperforms state-of-the-art methods by more than $6.3\%$. One of the surprising findings of our work is that the same estimator yields the best performance both with and without calibration. This is convenient, as the estimator works well "straight out of the box," but can be improved if needed by calibration. However, this seems to violate the conventional wisdom that train and test conditions must be matched. To better understand the reasons, we provide a new theoretical analysis that specifies the conditions under which this can be expected. The dataset is available at \href{http://nislgaze.ust.hk}{http://nislgaze.ust.hk}. Source code is available at \href{https://github.com/HKUST-NISL/GEDDnet}{https://github.com/HKUST-NISL/GEDDnet}.
		\end{abstract}

		\begin{IEEEkeywords}
			Appearance-based gaze estimation, Low Complexity Calibration, NISLGaze dataset, Dilated convolutions, Subject-dependent, Eye tracking, Deep neural networks.
	    \end{IEEEkeywords}}

	\maketitle

	\IEEEpeerreviewmaketitle

	\IEEEraisesectionheading{\section{Introduction}\label{sec:introduction}}
	\IEEEPARstart{E}{ye} gaze has been used successfully in many promising applications, such as human-computer interfaces~\cite{menges2017gazetheweb,pi2017probabilistic}, human-robot interaction~\cite{huang2016anticipatory}, health care~\cite{grillini2018towards}, virtual reality~\cite{outram2018anyorbit,patney2016towards} and social behavioral analysis~\cite{hoppe2018eye,rogers2018using}. These successful applications have attracted more and more attention to eye tracking.

	To date, most commercial eye trackers have relied upon active illumination, e.g. pupil center corneal reflections (PCCR) based eye tracker used infrared illumination. While these eye trackers provide high accuracy, they are costly, commonly used in indoor laboratory settings, and place strong constraints on users' head movements. To alleviate the constraints on head movement, researchers have proposed many novel methods~\cite{brau2018multiple,chong2018connecting,fuhl2018cbf,wang2018slam,pi2019task}. Unfortunately, these eye trackers typically fail outdoors due to strong interference from the sun's infrared radiation.

	Appearance-based gaze estimation estimates gaze direction from RGB images. It provides relatively unconstrained eye tracking and can be used both indoors and outdoors. It requires only commonly available off-the-shelf cameras. Cameras are now ubiquitous features on electronic devices, which will enable the widespread deployment of gaze tracking without additional cost on many platforms, such as mobile devices~\cite{valliappan2020accelerating}.

	High accuracy appearance-based gaze estimation is challenging due to the variability caused by factors such as changes in appearance, illumination and head pose~\cite{zhang2019mpiigaze}. The application of deep convolutional neural networks (CNNs) has reduced estimation error significantly~\cite{zhang2015appearance}. Trained using a large number of high quality real and synthetic datasets
	covering a wide range of these variations
	\cite{fischer2018rt,funes2014eyediap,he2019photo,krafka2016eye,shrivastava2017learning,smith2013gaze,sugano2014learning,wang_2018_CVPR,wood2016learning,zhang2015appearance}, deep CNNs can learn to compensate for much of the variability \cite{chen2018appearance,cheng2018appearance,deng2017monocular,krafka2016eye,lian2018multiview,palmero2018recurrent,parekh2017eye,ranjan2018light,xiong2019mixed,zhang2017s}. Despite these improvements, the estimation error of appearance-based approaches ($\sim$$5^\circ$) is still not as low as that achieved by using active illumination ($\sim$$1^\circ$).

	One way to reduce estimation error is to introduce subject-dependent calibration. For example, PCCR-based eye trackers typically require an initial calibration before they are used. The user is usually required to gaze sequentially at nine targets on a $3\times 3$ grid~\cite{guestrin2006general}. The data collected is used to calibrate a geometrical model of the eye. However, calibration for appearance-based methods has received relatively little attention to date.

    Simpler calibration procedures are desirable. However, there is a trade-off between simplicity and accuracy. The challenge we address in this paper is how to achieve the best accuracy with the lowest complexity. There are several dimensions along which we measure calibration set complexity: the number of different gaze targets, the number of images per gaze target, and the number of head positions at which data is collected.

    Reducing the number of gaze targets is highly desirable. First, it is less time consuming. For example, the nine gaze points used in PCCR calibration are sometimes reduced to five points arranged as a cross to save time. Second, calibration imposes additional constraints. Requiring the user to gaze at a number of known locations requires a mechanism, e.g. a screen, for specifying those points. Ideally, one could reduce the number of gaze targets to one: single gaze target calibration. In this case, a very convenient gaze target would be the camera, since this must always be visible to the user.

	Previously, we have described a calibration technique that decomposes gaze estimates into the sum of a gaze estimate that is estimated by a subject-independent deep dilated CNN and a subject-dependent bias~\cite{chen2020offset}. This was based on our observation that a subject dependent additive offset is the dominant factor that degrades accuracy. We demonstrated that gaze decomposition (GD) achieves better accuracy with low complexity calibration sets. While other more complex algorithms can eventually outperform ours, this only occurs for prohibitively large calibration datasets. We recap those results here for completeness.

	Reducing the number of images per gaze target reduces the amount of time the user spends gazing at each target. On the other hand, more images enables us to capture more variability in the head orientation, which improves estimates of the calibration parameters. While we have investigated the effect of increasing the number of images of a subject previously, this was done using datasets with limited head pose variability and for subjects looking at targets that were close, but not identical. Here we present a more detailed analysis using a new dataset introduced here, NISLGaze. This dataset has increased variability in head pose collected during each viewing of each target, enabling us to investigate the effect of head orientation variability on the estimation of calibration parameters.

	Most calibration procedures for eye tracking are performed with the head at only one position. However, the accuracy of PCCR eye trackers is very sensitive to changes in head position. If accurate eye-tracking is to be maintained, re-calibration as the head moves is required~\cite{pi2019task}. One hope for appearance based gaze estimation is that its estimates will be more robust to head location, e.g. due to the application of techniques such as normalization~\cite{zhang2018revisiting}. However, to our knowledge, robustness to head position has not been evaluated before. The NISLGaze dataset contains images of subjects at different head positions within the camera images, enabling for the first time the systematic evaluation of this effect. Using this dataset, we investigate two separate, yet related issues: robustness of the gaze estimates to head position, and robustness of the calibration parameters.

	Finally, we address the issue of making calibration optional. Due to the overhead it introduces, it may be desirable to forgo it in some cases, e.g. when users would have only short interaction with a gaze-aware human-computer or human-robot interface, yet still have the option for calibration if users desired more accurate estimates. Conventional wisdom would suggest we would need two separate estimators: one trained with calibration and one trained without calibration to avoid train/test mismatch and achieve the best performance in each scenario. Surprisingly, we find this is not necessary. Despite the train-test mismatch, applying gaze decomposition during training actually improves performance of the gaze estimator even when calibration is not used. This paper provides a new theoretical analysis, elucidating the conditions under which this effect occurs.

	Our experiments here are based mainly around a previously proposed deep CNN network for gaze estimation, which replaces max-pooling in the higher convolutional layers with dilated convolutions~\cite{chen2018appearance}. We have demonstrated that this replacement improved the performance of a gaze estimator built around a VGG backbone. Here, we provide additional support for the advantages of this replacement. We show that replacing max-pooling/large stride by dilated convolutions also improve the performance of more modern networks, e.g. those based on ResNet. We provide a sensitivity analysis showing that representations computed by networks with dilated convolutions are more responsive to the small changes in appearance due to gaze shifts.

	\section{Related Work}

	\subsection{Appearance-Based Gaze Estimation}
	Past approaches to appearance-based gaze estimation have included Random Forests \cite{sugano2014learning}, k-Nearest Neighbors \cite{sugano2014learning,schneider2014manifold}, and Support Vector Regression \cite{schneider2014manifold}. More recently, the use of deep CNNs to appearance-based gaze estimation has received increasing attention. Zhang \textit{et al.} proposed the first deep CNN for gaze estimation in the wild \cite{zhang2015appearance,zhang2019mpiigaze}, which gave a significant improvement in accuracy.

	Considering regions of the face outside the eyes has further improved accuracy. For example, Krafka \textit{et al.} proposed a CNN with multi-region input, including an image of the face, images of both eyes and a face grid~\cite{krafka2016eye}. Parekh \textit{et al.} proposed a similar multi-region network for eye contact detection~\cite{parekh2017eye}. Zhang \textit{et al.} proposed to take the full face image as input and adopts a spatial weighting method to emphasize features from particular regions of the face~\cite{zhang2017s}.

	Other work has focused on how to extract better information from eye images. Cheng \textit{et al.} developed a multi-stream network to address the asymmetry in gaze estimation between left and right eyes~\cite{cheng2018appearance}. Yu \textit{et al.} proposed to estimate the locations of eye landmark and gaze directions jointly~\cite{yu2018deep}. Park \textit{et al.} proposed to learn an intermediate pictorial representation of the eyes~\cite{park2018deep}. Lian \textit{et al.} proposed to use images from multiple cameras~\cite{lian2018multiview}. Xiong \etal{} proposed mixed effects neural networks to tackle the problem caused by the non \textit{i.i.d.} natural of eye tracking datasets~\cite{xiong2019mixed}.

	Still others have sought to improve robustness to the change in head pose. Deng and Zhu estimated the head pose in camera-centric coordinates and the gaze directions in head-centric coordinates, and then combined them geometrically~\cite{deng2017monocular}. Ranjan \textit{et al.} applied a branching architecture, whose parameters were switched according to a clustering of head pose angles~\cite{ranjan2018light}. Palmero \etal{} proposed to use 3D shape cue of the face~\cite{palmero2018recurrent}.

	The deep CNN network that is the basis of the gaze estimator we propose in this paper adopts dilated-convolutions, rather than max-pooling\cite{chen2018appearance}. While dilated-convolutions have been widely used for pixel-wise tasks, such as segmentation~\cite{yu2015multi,chen2017rethinking}, they have rarely been used for more global tasks, like regression or classification. We show that by learning high-level features at high resolution, dilated-convolutions better capture the subtle changes in eye appearance as gaze changes.

	\subsection{Calibration in Appearance-Based Gaze Estimation}

	To the best of our knowledge, the first implementation of personal calibration was in iTracker~\cite{krafka2016eye}. After training a subject-independent network, this system was calibrated by training a subject-dependent Support Vector Regressor on the features extracted from the last fully-connected layer of the subject-independent network. It used images collected as the subject gazed at 13 different locations as the calibration set. Error was reduced by about $20\%$ when calibrated on the full set, but increased when only calibrated on a subset of 4 locations, most likely due to overfitting. Lind{\'e}n \textit{et al.} proposed a network that had some subject-dependent latent parameters~\cite{linden2018appearance}. During training, these parameters were learned from the training data. During testing, these parameters were re-estimated by minimizing the estimation error over a calibration set. The calibration set required 45 to 100 images collected as the subject gazed at multiple points. Liu \textit{et al.} learned a homogeneous linear transformation matrix to warp the estimates from a subject-independent estimator~\cite{liu2018differential}. Zhang \etal{} learned a third-order polynomial function to warp the estimates~\cite{zhang2019evaluation}.

	The past work described above assumed high complexity calibration sets. Only recently has attention been paid to reducing the complexity of the calibration set. Liu \textit{et al.} proposed a differential approach for calibration, where they trained a subject-independent Siamese network to estimate the difference between the gaze angles of two images of the same subject~\cite{liu2018differential,liu2019differential}. Yu \etal{} proposed gaze redirection synthesis, which augment the calibration samples using a generative adversarial network (GAN) to fine-tune a subject-independent estimator~\cite{yu2019improving}. Park \etal{} proposed FAZE, which trained the last multi-layer perceptron using meta-learning to learn good initial weights that can adapt to a few samples~\cite{park2019few}. These works only require a few calibration samples gazing at different gaze targets.

	In comparison to the work described in the previous two paragraphs, we only adapt the subject-specific bias during calibration. The gaze decomposition is similar in spirit to the person-dependent latent parameters proposed by Lind{\'e}n et al.~\cite{linden2018appearance}, but there are several important differences. First, we use much fewer parameters (two v.s. six). Second, the parameters are introduced in the output layer, rather than fed to the last fully-connected layers. This results in far superior performance when only few calibration images are available. Reducing the size and complexity of the calibration set makes calibration easier and less time-consuming.

	\subsection{Eye Tracking Datasets}
	\begin{figure}[!t]
		\centering
		\subfloat[$(10^\circ \rm H, 0^\circ \rm V)$]{
			\includegraphics[width=2.6cm]{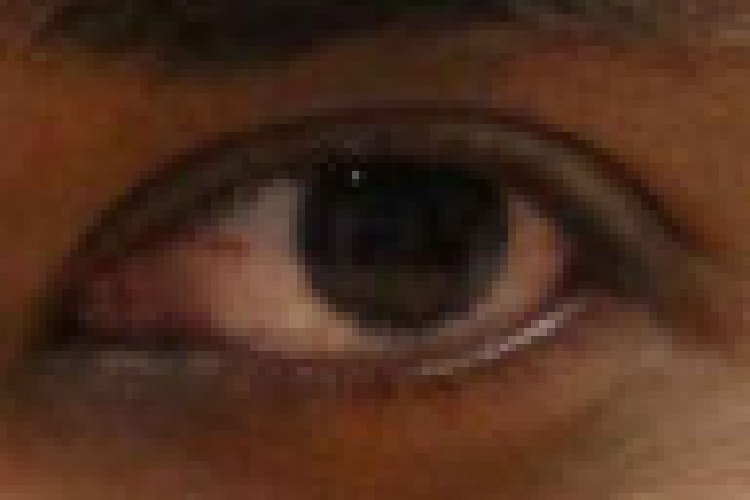}
			\label{fig:eyeDiffa}
		}
		\subfloat[$(15^\circ \rm H, 0^\circ \rm V)$]{
			\includegraphics[width=2.6cm]{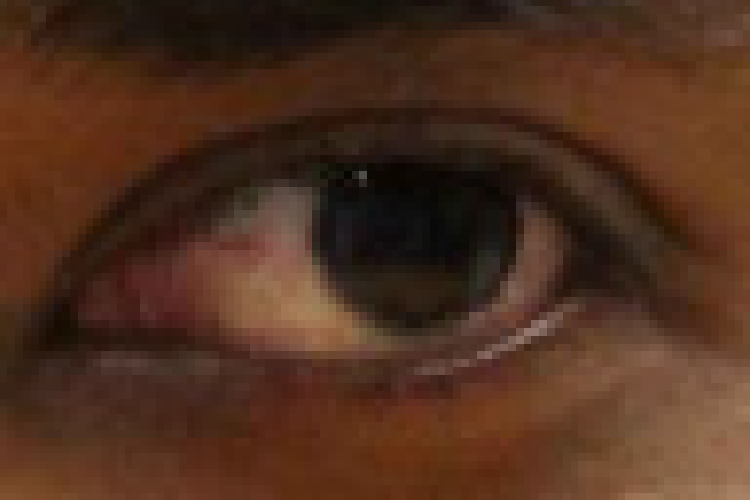}
			\label{fig:eyeDiffb}
		}
		\subfloat[Differences]{
			\includegraphics[width=2.6cm]{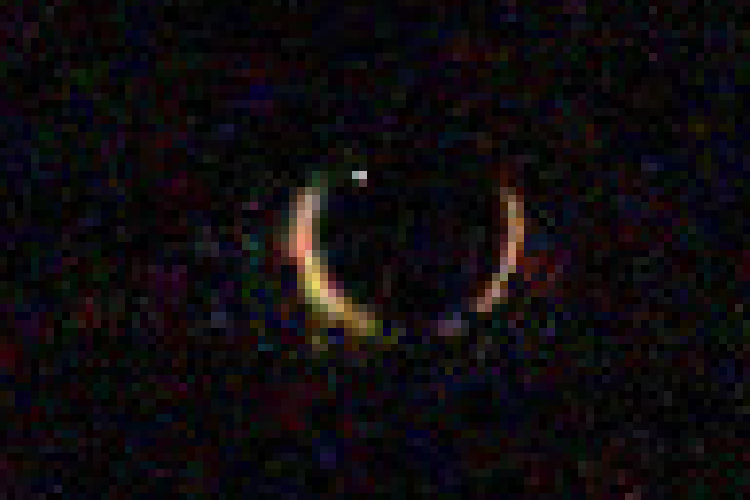}
			\label{fig:eyeDiffc}
		}
		\caption{Images of two left eyes and their difference from Columbia Gaze dataset~\cite{smith2013gaze}. (a) Image with $10^\circ$ horizontal and $0^\circ$ vertical gaze angle. (b) Image with $15^\circ$ horizontal and $0^\circ$ vertical gaze angle. (c) The absolute difference between (a) and (b) (value is scaled for better illustration).}
		\label{fig:1}
	\end{figure}
	Available datasets for training appearance-based gaze estimators can be categorized as either real-world or synthetic based on the image generation process, and as either eye-only or full-face based on image content.

	Real-world datasets contain images of real people taken as they gaze at different points in the environment. ColumbiaGaze contains images of 56 subjects with a discrete set of head poses and gaze targets~\cite{smith2013gaze}. EYEDIAP contains videos of 16 subjects gazing at targets on a screen or floating in 3D~\cite{funes2014eyediap}. MPIIGaze contains images of 15 subjects when using their laptops~\cite{zhang2015appearance}. GazeCapture contains images of 1,474 subjects taken while they were using tablets~\cite{krafka2016eye}. RT-GENE contains images of 15 subjects behaving naturally~\cite{fischer2018rt}. Commercial eye tracking glasses were used to provide ground truth gaze direction, and a GAN was used to remove the eye tracking glasses from the images.

	To provide samples with larger variations in head pose and appearance, some researchers have generated datasets of synthetic images. The UT-multiview~\cite{sugano2014learning} and UnityEyes~\cite{wood2016learning} datasets used real-world images and 3D eye region models to render eye-only images at arbitrary head poses and gaze directions. Recently, GANs have been used to generate photorealistic samples~\cite{shrivastava2017learning,wang_2018_CVPR,he2019photo}.

	We describe here a new dataset, NISLGaze, which contains full-face images of 21 subjects. NISLGaze contains larger variability in face locations and head poses than existing datasets. This variability makes it useful for estimating the performance of appearance-based gaze estimators in relatively unconstrained settings.

	Recently, Gaze360~\cite{kellnhofer2019gaze360}, ETH-XGaze~\cite{zhang2020eth} and a dataset collected by Deng and Zhu ~\cite{deng2017monocular} have been reported. These provide images of more than 100 subjects with large variations in head orientation. However, most faces in ETH-XGaze and the Deng/Zhu dataset are located at the center of the images. While faces in Gaze360 appear at more diverse image locations, there is no guarantee that one subject will appear at multiple locations. Although the NISLGaze dataset has fewer subjects, it collects more data per subject, and ensures that each subject appears at locations widely distributed throughout the images. Therefore, we can use this dataset to systematically evaluate our proposed network and subject-dependent calibration method at multiple locations and orientations.

	\section{METHODOLOGY}
	\subsection{Spatial Resolution}
	Image differences due to gaze shifts can be subtle, especially if the head remains fixed. As shown in Fig.~\ref{fig:1}, eye images with gaze angles differing by $5^\circ$ horizontally differ at only a few pixels. Intuitively, extracting high-level features at high resolution can better capture these subtle differences. However, most current CNN architectures use multiple downsampling layers, e.g. convolutional layers with large strides and max-pooling layers, which reduce spatial resolution. This is useful for classification, as the network tolerates small variations in position. However, for gaze estimation, valuable information is lost.

	To address this problem, we have proposed to replace convolutional/downsampling layers by dilated-convolutional layers. Dilated-convolutions increase receptive field (RF) size while keeping the number of parameters manageable by inserting spaces (zeros) between weights. Let $(x, y)$ and $(n, m)$ represent spatial position, and $k$ indicate the feature map. Given an input feature map $u(x, y, k)$, a weight kernel $w_{nmk}$ of size $N\times M\times K$, a bias term $b$, and dilation rates $(r_1, r_2)$, the output feature map $z(x, y)$ of a dilated-convolutional operation can be calculated by
	\begin{equation}
	z(x,y)=\sum_{k=1}^{K}\sum_{m=0}^{M-1}\sum_{n=0}^{N-1}u(x+nr_1,y+mr_2,k)w_{nmk}+b.
	\label{dilated}
	\end{equation}
	The dilation rates $(r_1, r_2)$ determine the amount by which the size of RF increases.

	\subsection{Subject-Dependent Bias}
	\label{sec:bias}

	\begin{figure}
		\centering
		\includegraphics[width=8.5cm,height=2.2cm]{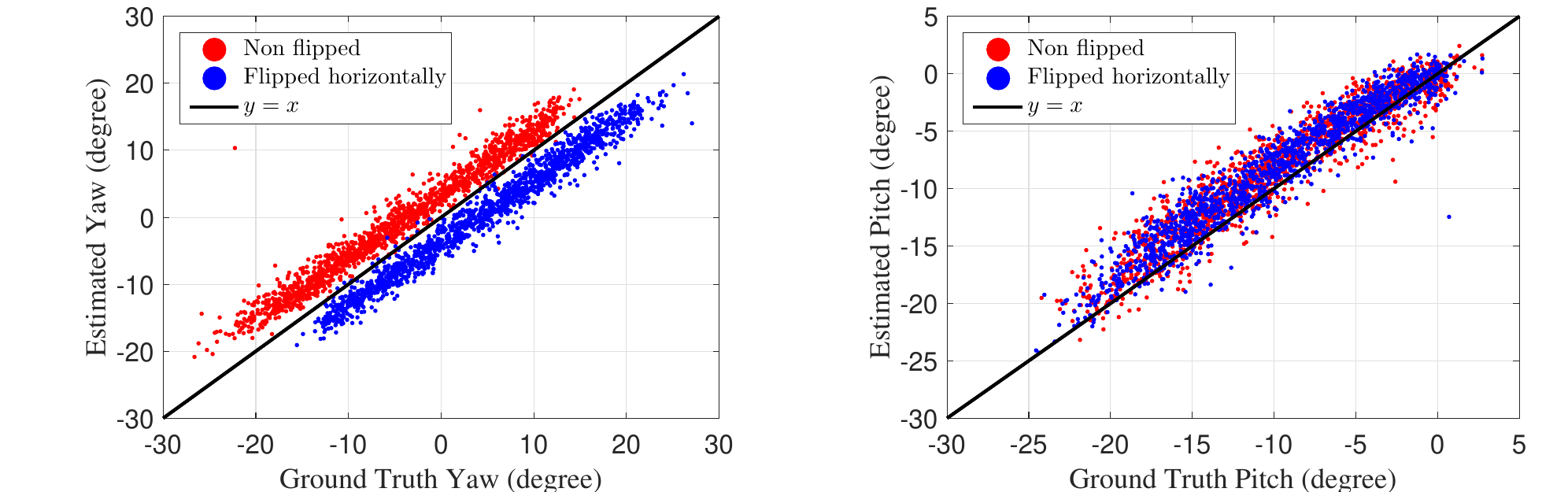}
		\caption{Scatter plots of the estimated gaze angles versus the ground truth of subject p06 from MPIIGaze.}
		\label{fig:bias_subj}
	\end{figure}
	\begin{figure}
		\centering
		\subfloat[]{
			\includegraphics[width=2.8cm]{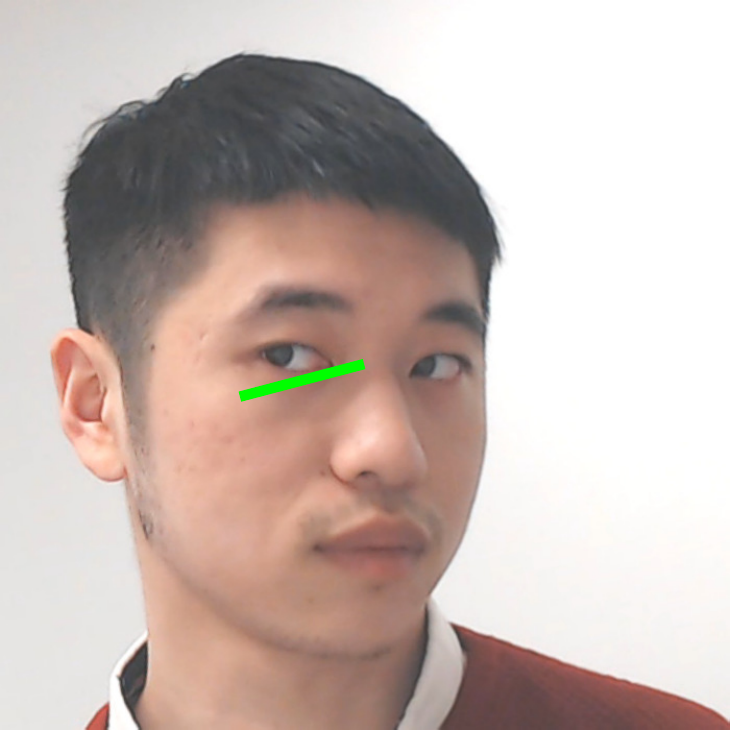}
			\label{fig:overview1}
		}
		\subfloat[]{
			\includegraphics[width=4.5cm]{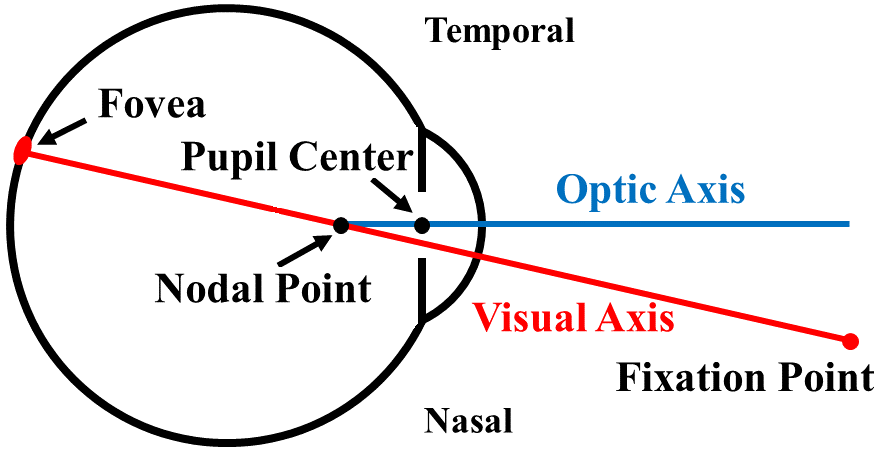}
			\label{fig:overview2}
		}
		\caption{Overview. (a) Appearance-based gaze estimation estimates the gaze angles (pitch and yaw) from RGB images. However, there are some factors not observable from the images, such as (b) the subject-dependent offset between the optic axis and visual axis.}
		\label{fig:overview}
	\end{figure}
	Fig.~\ref{fig:bias_subj} compares the gaze angles estimated by a subject-independent estimator~\cite{chen2018appearance} and the ground truth for one subject (p06) from MPIIGaze~\cite{zhang2015appearance}. The difference between an estimated angle and the corresponding ground truth is presented by the vertical difference between a color dot and the black line $y=x$. The scatter plots indicate that there is a bias between the estimates and ground truth angles, which is quite constant across gaze angles. When the image are flipped horizontally, the yaw bias changes sign, but the pitch bias stays similar.

	The bias has two sources: first, there are components of gaze that vary from subject to subject that cannot be estimated from appearance. For example, as shown in Fig.~\ref{fig:overview}(b), there is a deviation between the visual axis (the line connecting the nodal point with the fovea) and the optic axis (the line connecting the nodal point with the pupil center) of an eye, which varies from person to person~\cite{atchison2000optics, guestrin2006general}. Second, there may be changes in appearance between subjects that are correlated with changes due to gaze shifts.

	Fig.~\ref{fig:Error} analyzes the error of the subject-independent estimator~\cite{chen2018appearance} across subjects on MPIIGaze in leave-one-subject-out cross-validation. Fig.~\ref{fig:Error} shows the means and standard deviations (SDs) of the yaw and pitch error, for different subjects and for both the original and horizontally flipped images. These results show that in both yaw and pitch angles, the errors are generally biased. The bias varies widely across subjects. When a image is horizontally flipped, the yaw bias typically has similar magnitude but a different sign, while the pitch bias remains similar. The mean squared bias across subjects (16.2 $\mathrm{deg}^2$) exceeds the mean intra-subject variance (12.9 $\mathrm{deg}^2$), indicating that the bias is a significant contributor to the error.

	Motivated by these findings, for the $j^{\text{th}}$ image of the $i^{\text{th}}$ subject in a dataset, $X_{i,j}$, we decompose our gaze estimate $\hat{g}(X_{i,j})\in\mathbb{R}^2$ in (yaw, pitch) into the sum of a subject-independent estimate $\hat{t}(X_{i,j};\Phi)\in\mathbb{R}^2$ and a subject-dependent bias $\hat{b}_i\in\mathbb{R}^2$:
	\begin{equation}
	\hat{g}(X_{i,j})=\hat{t}(X_{i,j};\Phi)+\hat{b}_i,
	\label{dec}
	\end{equation}
	where $\Phi$ denotes the parameters of the dilated CNN described below. During training, the subject-dependent bias is estimated by minimizing estimation error. During testing/deployment, it is estimated from calibration data or set to zero in the absence of training data.

	\begin{figure}[!t]
		\centering
		\subfloat[Mean and SD of yaw errors across 15 subjects.]{
			{\includegraphics[width=0.95\columnwidth]{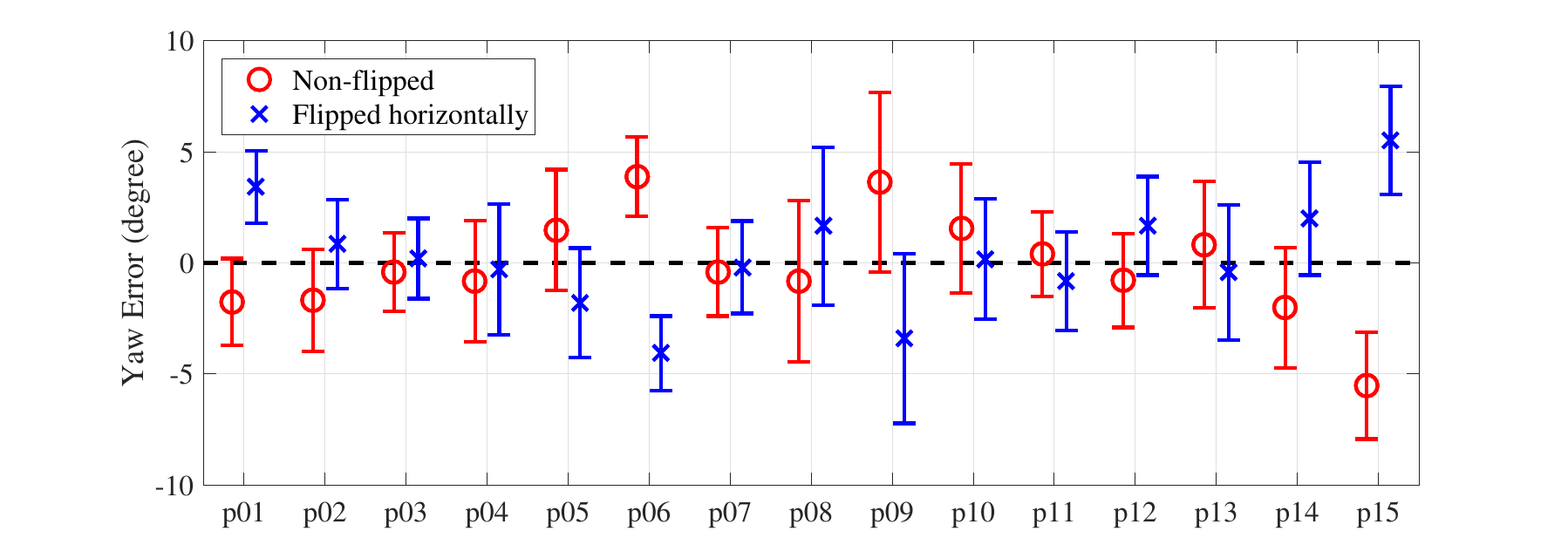}}
			\label{fig:biasa}
		}

		\subfloat[Mean and SD of pitch errors across 15 subjects.]{
			{\includegraphics[width=0.95\columnwidth]{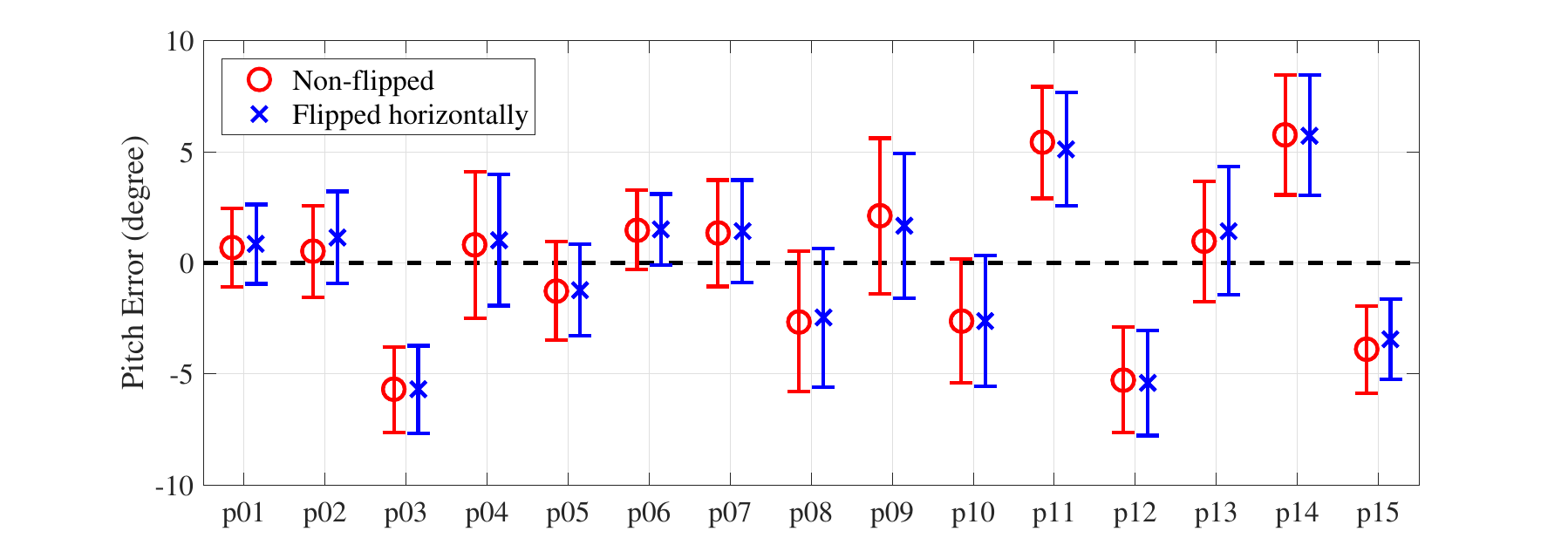}}
			\label{fig:biasb}
		}
		\caption{Error analysis of our subject-independent estimator~\cite{chen2018appearance} on the MPIIGaze dataset~\cite{zhang2015appearance} for (a) yaw and (b) pitch. Circles and crosses indicate the means. Error bars indicate standard deviations.}
		\label{fig:Error}
	\end{figure}
	\subsection{Preprocessing}
	We adopt the data normalization method in~\cite{zhang2018revisiting}. This method first applies a virtual three degree of freedom (DoF) rotation to the camera. Two DoF are used to point the virtual camera towards a reference point on the face. One DoF cancels the head roll so that the lines connecting the midpoints of the eyes are horizontal. The method then applies a scaling so that images appear as if they are taken from a fixed distance. The normalized images are converted to gray scale and histogram-equalized. The ground truth gaze angles are normalized correspondingly.

	As stated in~\cite{zhang2018revisiting}, the normalized images still have two degrees of freedom in head pose (pitch and yaw). These must be taken into account by the deep-neural-network-based gaze estimator.

	 We align the images based on detected landmarks from dlib~\cite{king2009dlib} and OpenFace~\cite{baltrusaitis2018openface}. The face images are aligned so that the center of each eye is at a fixed location in the resulting images. The eye images are aligned so that the eye corners are at fixed locations.

	\subsection{Architecture}
	\begin{figure*}
		\centering
		\includegraphics[width=15.5cm]{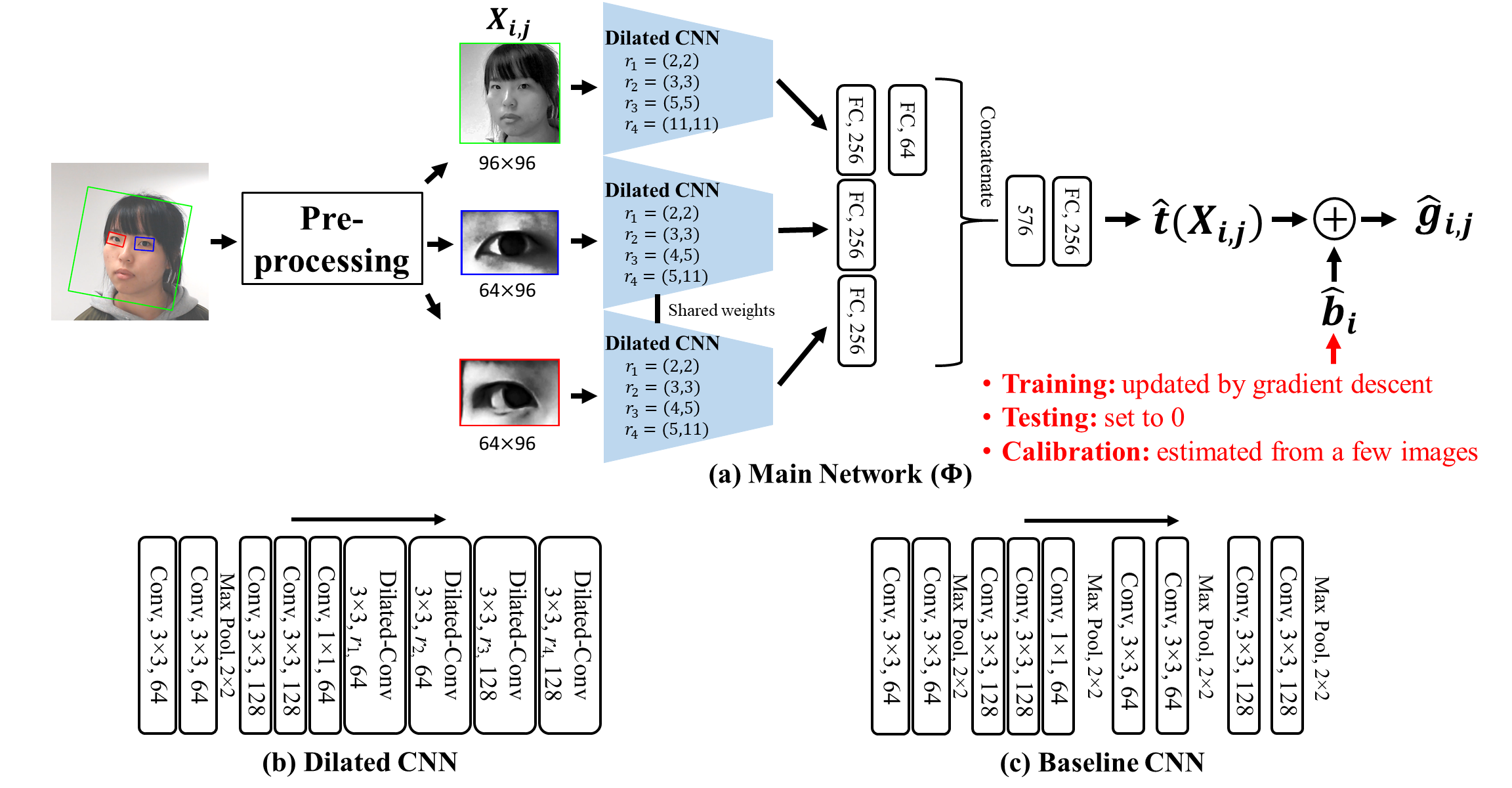}
		\caption{Architecture of the proposed network. (a) The main network that outputs $\hat{t}(X_{i,j})$ based on the input images $X_{i,j}$. (b) The dilated CNN that is the basic component of (a). (c) The regular CNN without dilated-convolutions used as a baseline. FC denotes fully-connected layers, Conv denotes convolutional layers, and Dilated-Conv denotes dilated-convolutional layers with $r$ as the dilation rate.}
		\label{fig:Archi}
	\end{figure*}

	We propose GEDDnet, a network for Gaze Estimation with Dilation and (gaze) Decomposition. The GEDDnet architecture is presented in Fig.~\ref{fig:Archi}. It is inspired by iTracker~\cite{krafka2016eye}, but differs in that we do not use the face grid as input, and in that we replace some convolutional and max-pooling layers with dilated-convolutional layers.

	Our network takes images of the face and of both eyes as input. The input images, collectively referred to by $X_{i,j}$, are first fed separately to three dilated CNNs. The two dilated CNNs that take the eyes as input share the same weights. The feature extracted by the dilated CNNs are then connected to fully-connected (FC) layers, concatenated, fed to another FC layer, and finally to a linear output layer. We denote the parameters of this network by $\Phi$.

	The architecture of the dilated CNN is shown in Fig.~\ref{fig:Archi}(b). It has one max-pooling layer, five regular convolutional layers and four dilated-convolutional layers with different dilation rates. The strides for all convolutional layers are 1.
	We used dilated CNNs only in the upper layers, as this enabled us to  exploit transfer learning for the lower layers, which were initialized from an ImageNet-pretrained model. Transfer learning is appealing, since gaze datasets have much fewer samples than large-scale image classification datasets like ImageNet. Lower-level features generalize well to different tasks. For example, we have found that networks with all dilated layers or with the same structure but trained from random initial weights achieve errors without calibration of $5.7^\circ$ and $5.2^\circ$, compared to the $4.5^\circ$ for GEDDNet reported in Table~\ref{table:calFree}.

	We use Rectified Linear Units (ReLUs) as the activation functions. We apply zero-padding to regular convolutional layers, but no padding to dilated-convolutional layers. The initial weights of the first four convolutional layers are transferred from VGG-16~\cite{simonyan2014very} pre-trained on the ImageNet dataset~\cite{deng2009imagenet}. The weights in all other layers are randomly initialized. Batch renormalization layers~\cite{ioffe2017batch} are applied to all layers trained from scratch. Dropout layers with dropout rates of 0.5 are applied to all FC layers.

	\subsubsection{Training}
	We train the network by solving the following problem by mini-batch gradient descent:
	\begin{equation}
	\min_{\Phi,\hat{b}_i} \left(\sum_{i,j}\begin{Vmatrix}(g_{i,j}-\hat{t}(X_{i,j};\Phi)-\hat{b}_i\end{Vmatrix}_2^2
	+ \lambda \begin{vmatrix}\sum_i \hat{b}_i\end{vmatrix}\right),
	\label{train_t}
	\end{equation}
    where $\begin{Vmatrix}\cdot\end{Vmatrix}_2$ represents the Euclidean norm,  $\begin{vmatrix}\cdot\end{vmatrix}$ represents the absolute value, and $\hat{b}_i$ are the learned estimates of the subject-dependent bias. The second term is a regularizer that ensures that the mean subject-dependent bias over the training set is zero. It is not a sparsity regularizer, which would have the form $\sum_i |\hat{b}_i|$. Training is insensitive to the value of $\lambda$ (we used $\lambda=1$) and to the norm used, because the second term can be easily optimized to zero without changing the value of the first term, since every change of the mean bias can be compensated by an equivalent change in the constant offset of the last layer, which computes $\hat{t}$. Rather than treating the biases $\hat{b}_i$ as extra learned parameters, they could be estimated by the mean offset for each subject over the training set. However, this would be time-consuming given the large size of the training set, especially since we apply online data augmentation. Removing the last layer bias would not remove the need for the regularizer, since it is possible that one of the neurons feeding the last layer could serve as a constant offset. The regularizer could be removed if we set the bias of one of the subjects in the training set to zero. However this would require us to set $\hat{b}_m$ equal to the mean bias across the remaining subjects in the training set if no calibration images are available. This adds an extra parameter required during deployment, which would vary depending on the training set and the subject chosen to have zero bias. Thus, we prefer to add the regularizer during training to simplify deployment, but both approaches would result in similar performance.

	We implement our network in TensorFlow. We use the Adam optimizer with default parameters. The batch size is 64. The initial learning rate is 0.001, which is divided by 10 after every ten epochs. The training proceeds for 35 epochs. We apply standard online data augmentation techniques, including random cropping, scaling, rotation and horizontal flipping. Since the bias varies if the images are horizontally flipped, we consider the non-flipped and flipped images as belonging to different subjects.

	\subsubsection{Testing and Calibration}
	During testing, gaze estimates are obtained according to Eq.~\eqref{dec}. For a new subject~$m$, we set $\hat{b}_m = 0$ if no calibration images are available.

	A calibration set $\D_m$ contains image-gaze pairs for a subject~$m$, i.e., $\D_m=\{(X_{m,j}, g_{m,j}),j=1,2,\dots,|\D_m|\}$, where $|\D_m|$ denotes the cardinality of $\D_m$. We measure the complexity of the calibration set by the number of gaze targets $T$, the number of images per gaze target $S$ and the number of head positions $P$. Thus, $S\cdot T \cdot P=|\D_m|$. For best performance, the images for each gaze target should capture the variability experienced during testing, e.g. in head pose and/or illumination.

	Given a calibration set $\D_m$, we set $\hat{b}_m$ equal to the Maximum A Posteriori (MAP) estimate of the bias. We assume that $g_{m,j}-\hat{t}(X_{m,j}) \sim \mathcal{N}({b}_m, \sigma_{\hat{t}}^2\rm{I})$ and that $b_m \sim \mathcal{N}(\bm{0}, \sigma_0^2\rm{I})$, where $\mathcal{N}$ represents the normal distribution, and that the two random variables are independent According to Bayes' rule,
	\begin{equation}
	\begin{aligned}
	\hat{b}_m&=\mathop{\arg\max}_{b_m} \Big(p(b_m)\prod_{j=1}^{|\D_m|}p(g_{m,j}-\hat{t}(X_{m,j})|b_m)\Big)\\
	&=\frac{1}{|\D_m|+\frac{\sigma_{\hat{t}}^2}{\sigma_0^2}}\sum_{j=1}^{|\D_m|}\Big(g_{m,j}-\hat{t}(X_{m,j})\Big),
	\label{cali}
	\end{aligned}
	\end{equation}
	where $p(\cdot)$ represents the probability density function. We estimate the values of $\sigma_0$ and $\sigma_{\hat{t}}$ from training. The calibration method proposed in our previous work~\cite{chen2020offset} is a special case of this method ($\sigma_{\hat{t}}^2=0$), corresponding to the Maximum Likelihood (ML) estimate.

	\section{Datasets}
	\subsection{Existing Datasets}
	\noindent\textbf{MPIIGaze}. This dataset contains images of full face of 15 subjects (six females, five with glasses). We use the ``Evaluation Subset'', which contains 3,000 images per subject. Half of the images are flipped horizontally in this subset. The reference point for image normalization is set to the center of the face for within-dataset evaluation as in~\cite{zhang2017s}, and to the midpoint of both eyes for cross-dataset evaluation for consistency with other datasets.

	\noindent\textbf{EYEDIAP}. This dataset contains videos of full face with continuous screen target, discrete screen target or floating target, and with static or dynamic head pose. We use the data from screen targets, which includes 14 subjects (three female, none with glasses). The reference point for image normalization is set to the midpoint of both eyes.

	\noindent\textbf{ColumbiaGaze}. This dataset contains 5,800 images of full face of 56 subjects (24 females, 21 with glasses). For each subject, images are collected for each combination of five horizontal head poses ($0^\circ$, $\pm 15^\circ$, $\pm 30^\circ$), seven horizontal gaze angles $H$ ($0^\circ$, $\pm 5^\circ$, $\pm 10^\circ$, $\pm 15^\circ$) and three vertical gaze angles $V$ ($0^\circ$, $\pm 10^\circ$). The reference point for image normalization was set to the midpoint of both eyes.

	\subsection{The NISLGaze Dataset}
	To estimate the performance of the proposed algorithm over the dimensions of interest we have identified, we collected the NISLGaze Dataset. This dataset contains 21 subjects (10 females and 10 with glasses). During collection, the camera was rotated so that each subject appeared in nine different image locations (see Fig.~\ref{fig:dataset2}(a)). For each location, the subjects gazed at nine different blocks on a wall (see Fig.~\ref{fig:dataset2}(b)) while moving their heads freely but keeping their body position fixed. Fig.~\ref{fig:dataset1} shows that the distributions of face locations ($\bm{l}$), head poses ($\bm{h}$) and gaze directions ($\bm{g}$) in the normalized space~\cite{zhang2018revisiting} of NISLGaze are wider than those of the commonly used MPIIGaze~\cite{zhang2015appearance} and EYEDIAP~\cite{funes2014eyediap} datasets. The experimental procedures involving human subjects described in this article were approved by Committee on Research Practices at the Hong Kong University of Science and Technology.

	\subsubsection{Data Collection}
	We used a Logitech C922 webcam to record videos at a resolution of $1920\times 1080$ at 30 fps. The subjects were instructed to stand 90 cm away from a white wall. On the wall, a 60 cm $\times$ 60 cm region was uniformly separated into $3\times 3$ blocks. Inside each block, except the central block, black crosses arranged in a $3\times 3$ grid were presented as gaze targets (see Fig.~\ref{fig:dataset2}(b)). The height of the center of the central block was 160 cm, which was about the eye height of the subjects. The camera was fixed at 10 cm away from the wall and centered in the central block.

	The data collection was divided into nine sessions. The face location was fixed during the whole collection. In each session, the face was at one of the nine locations shown in Fig.~\ref{fig:dataset2}(a) by rotating the camera. The subject was instructed to gaze at the nine gaze blocks sequentially in a random order. The central block was gazed at three times, while the others were gazed at only once. When the subject was asked to gaze at the central block, s/he was instructed to look at the camera; When s/he was asked to gaze at the other eight blocks, s/he was instructed to look at each of the crosses one by one. Transitions between gaze targets were signaled by a notification sound. The subject was instructed to rotate his/her head freely. We recorded one video for each gaze block, which lasted for about 30 s. This collection procedure yielded 2,079 videos (11 videos/location $\times$ 9 locations/subject $\times$ 21 subjects $=$ 2,079 videos. For each subject, data collection took from one to three days. For most subjects (15), data collection spanned three days (three sessions per day).
	\begin{figure}
		\centering
		\subfloat[Nine different face locations]{
			\includegraphics[width=5cm]{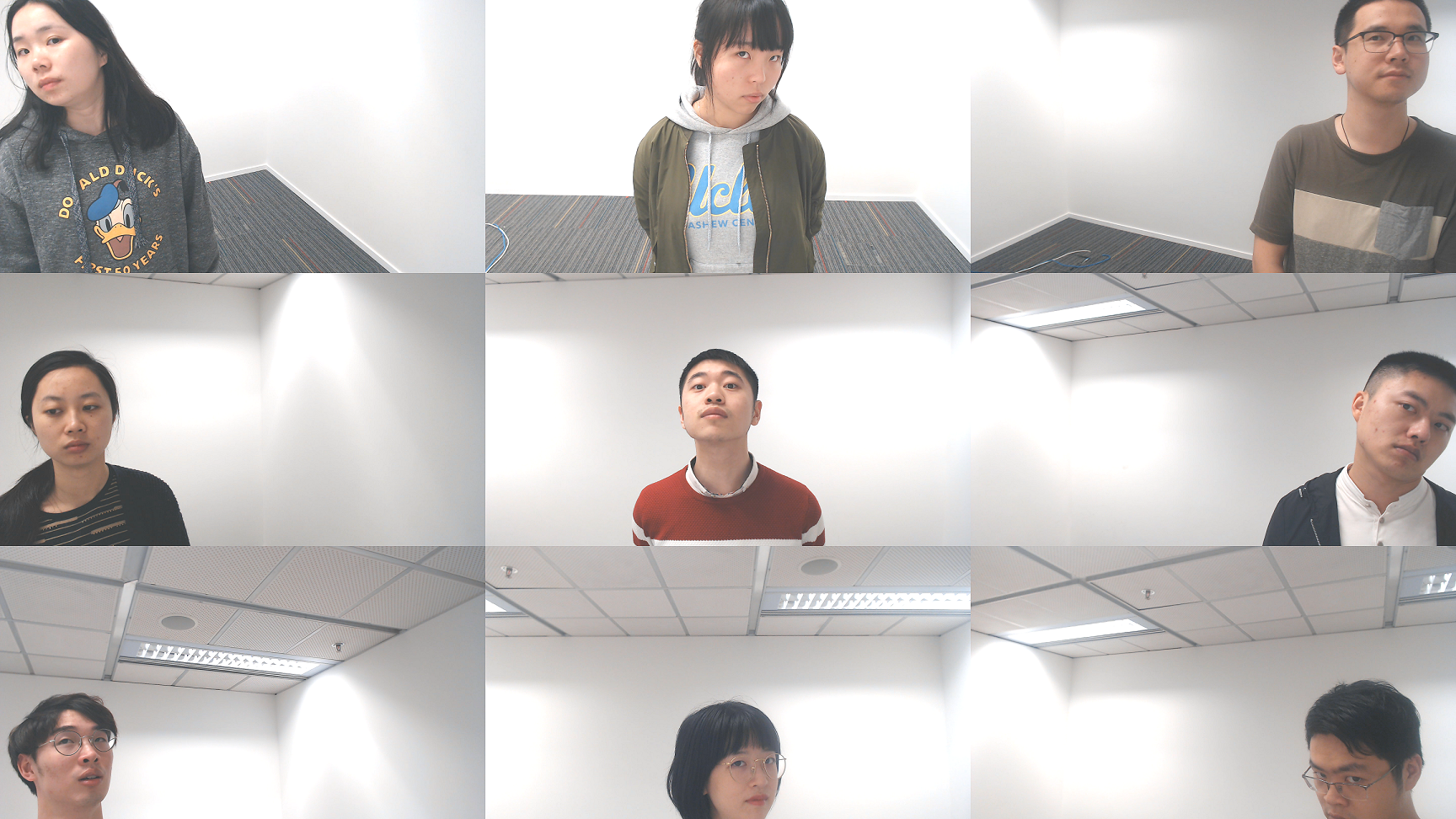}
			\label{fig:dataset2l}
		}
		\subfloat[Gaze targets]{
			\includegraphics[width=2.8cm]{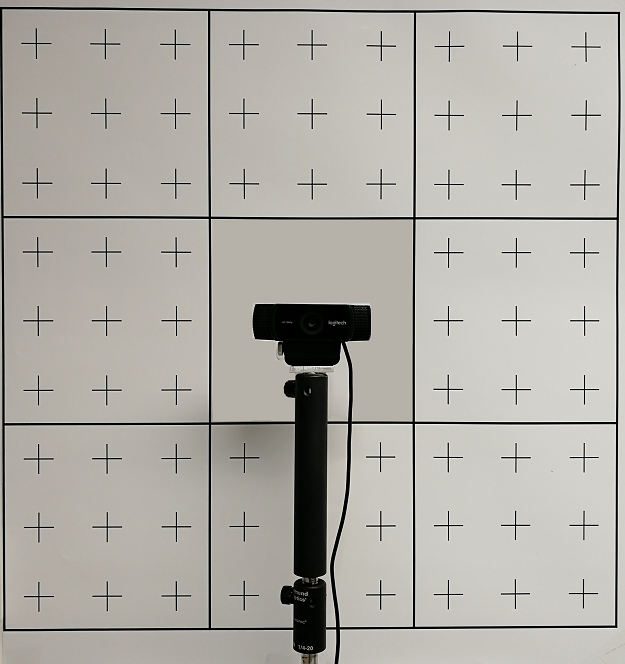}
			\label{fig:dataset2g}
		}
		\caption{The setting of dataset collection. Each subject (a) appears at nine different locations on the images and (b) is instructed to gaze at 72 crosses and the camera at each location.}
		\label{fig:dataset2}
	\end{figure}
	\begin{figure}
		\centering
		\includegraphics[width=8.5cm]{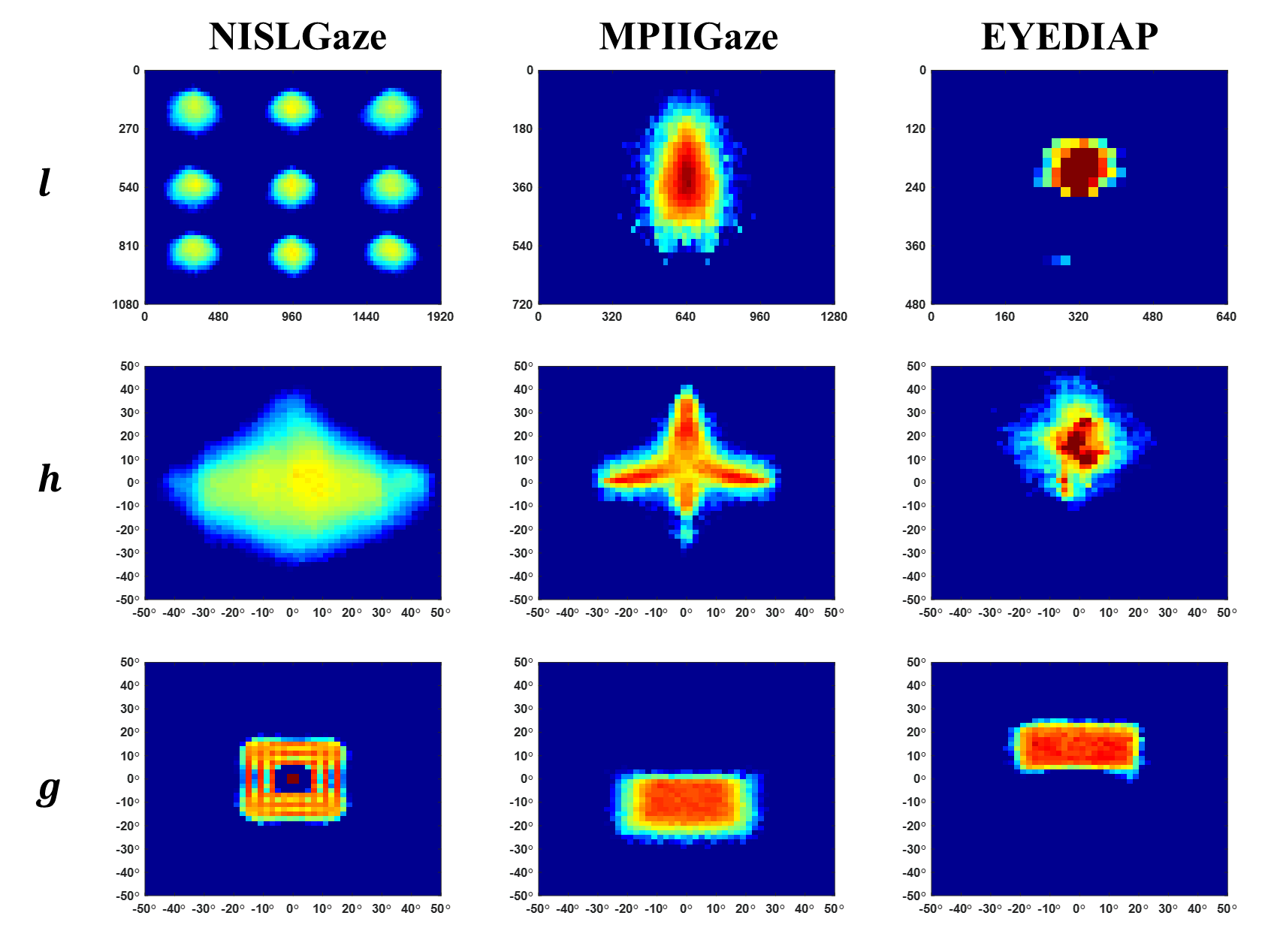}
		\caption{Statistics of the NISLGaze (ours), MPIIGaze and EYEDIAP datasets in terms of face location ($\bm{l}$), head pose ($\bm{h}$) and gaze directions ($\bm{g}$).}
		\label{fig:dataset1}
	\end{figure}

	\subsubsection{Post Processing}
	\label{sec: nisl}

	We extracted an \textit{Evaluation Subset} of images from the raw videos, which we used for training and testing. We first used OpenFace~\cite{baltrusaitis2018openface} to extract the facial landmarks and then sampled images from the video at 10 fps. We removed images for which either the landmark detector failed or both eyes were not detected. We also removed images containing blinks by setting a threshold on the eye aspect ratio calculated from the detected facial landmarks~\cite{cech2016real}. In addition, we removed images that were collected within 0.2 s after each notification sound. The resulting \textit{Evaluation Subset} contains $496,695$ images in total (about 2,500 images for each subject at each face location).

	We further defined 10 subsets of the \textit{Evaluation Set} according to the face location. Nine of the subsets contains the images of all subjects at one face location. We denote these by \textit{\LS~$(\alpha)$}, $\alpha\in\{\text{ul}, \text{uc}, \text{ur}, \text{cl}, \text{c}, \text{cr}, \text{ll}, \text{lc}, \text{lr}\}$ $\alpha$, where u means upper, l means lower or left depending upon whether it is in the first or second position, r means right and c means center. These nine subsets are disjoint. Their union is the \textit{Evaluation Set}. The tenth subset is the \textit{Mixed Location Subset}, which has a similar size as the \textit{Location Subsets}, but contains images of all subjects at all face locations. We obtained this dataset by downsampling the \textit{Evaluation Subset} by a factor of 9.

	We labeled our dataset by assuming that the centers of subjects' eyes were at a fixed location in space. To be specific, we created a right-handed coordinate system whose origin was located on the wall at the center of the gaze targets with the $z$-axis pointing into the wall and the $y$-axis pointed upwards. The locations of the crosses on the wall were in the form of ($x$, $y$, 0). The distance between two adjacent crosses was 6 cm. The camera was at (0, 0, -9.5 cm). We assumed that the subjects' eyes were fixed at (0,height of eye - 160 cm,-90 cm). Strictly speaking, this is not true, since subjects were rotating their heads while maintaining gaze on each target. However, deviations due to this rotation were small.

	\section{Experiments}

	We evaluated performance both with and without calibration. We evaluated on the MPIIGaze~\cite{zhang2015appearance}, the EYEDIAP~\cite{funes2014eyediap}, the ColumbiaGaze~\cite{smith2013gaze} and the NISLGaze datasets. We conducted both within- and cross-dataset evaluation.

	We evaluated two calibration conditions: multiple gaze target calibration (\textbf{MGTC}) and single gaze target calibration (\textbf{SGTC}). For MGTC, We set the number of image per gaze target $S=1$, but allowed the number of gaze targets $T$ to vary. For SGTC, we set $T=1$, but allowed $S$ to vary. For each calibration condition, we created calibration sets by sampling from the test set. We report performance (e.g. estimation error) computed over all images in the test set, except that in the calibration set.

	\begin{table}[!t]
		\centering
		\renewcommand\arraystretch{1.3}
		\caption{Mean Angular Errors for Gaze Estimation without Calibration.}
        \label{table:calFree}
		\begin{tabular}{c|c|c}
			\cline{1-3}
			Name&MPIIGaze&EYEDIAP\\
			\hline
			iTracker~\cite{krafka2016eye}& $6.2^\circ$& $8.3^\circ$\\
			\hline
			iTracker (AlexNet)~\cite{zhang2019mpiigaze}& $5.6^\circ$& \\
			\hline
			Spatial weight CNN~\cite{zhang2017s}& $4.8^\circ$& $6.0^\circ$\\
			\hline
			CNN with shape cue~\cite{palmero2018recurrent}&$4.8^\circ$ &$5.9^\circ$\\
			\hline
			RT-GENE~\cite{fischer2018rt}& $4.8^\circ$& \\
			\hline
			\hline
			Baseline CNN& $5.3^\circ$& $6.2^\circ$\\
			\hline
			Baseline CNN + GD& $4.7^\circ$& $5.9^\circ$\\
			\hline
			Dilated CNN~\cite{chen2018appearance}& $4.8^\circ$& $5.8^\circ$\\
			\hline
			GEDDNet (Dilated CNN + GD)& $\bm{4.5^\circ}$& $\bm{5.4^\circ}$\\
			\hline
			ResNet & $5.4^\circ$& \\
			\hline
			ResNet + GD & $5.2^\circ$ &  \\
			\hline
			Dilated ResNet & $5.3^\circ$& \\
			\hline
			Dilated ResNet + GD& $4.9^\circ$& \\
			\cline{1-3}
		\end{tabular}
	\end{table}

	\subsection{Gaze Estimation without Calibration}

	We evaluated the performance of four estimators trained with and without gaze decomposition and tested them without calibration. The four estimators were the proposed Dilated CNN, the baseline CNN without dilated convolutions shown in Fig.~\ref{fig:Archi}(c), a ResNet architecture, and a ResNet architecture with dilated convolutions. We included ResNet as an example of a more modern network in comparison with GEDDNet, which followed an older VGG-like architecture.

	We conducted these experiments on the MPIIGaze and EYEDIAP datasets. For the MPIIGaze dataset, we conducted 15-fold leave-one-subject-out cross-validation. For the EYEDIAP dataset, we followed the protocol described in~\cite{zhang2017s}, namely, five-fold cross-validation on four VGA videos (both continuous and discrete screen targets with both static and dynamic head pose) sampled at 2 fps, resulting in about 1,200 images per subject.

	We also compared our method with prior work: iTracker~\cite{krafka2016eye,zhang2017s}, spatial weights CNN~\cite{zhang2017s}, CNN with shape cue~\cite{palmero2018recurrent}, RT-GENE without ensembling~\cite{fischer2018rt} and Dilated-Net~\cite{chen2018appearance}. All of these methods used face images (or face plus eye images) as input. We re-implemented the CNN with shape cue and used the originally published results for the other methods.

	The results are shown in TABLE~\ref{table:calFree}. On MPIIGaze, the GEDDnet (Dilated CNN + GD) achieved a $4.5^\circ$ mean angular error, outperforming the state-of-the-art $4.8^\circ$~\cite{chen2018appearance,fischer2018rt,palmero2018recurrent,zhang2017s} by $6.3\%$. On EYEDIAP, it achieved $5.4^\circ$, outperforming the state-of-the-art $5.9^\circ$~\cite{palmero2018recurrent} by $8.5\%$. The GEDDnet outperformed the deeper ResNet networks, possibly due to overfitting. Thus, we did not evaluate it on EYEDIAP.

	Surprisingly, estimators trained with gaze decomposition outperformed those trained without. On the MPIIGaze dataset, training with gaze decomposition reduced test error by 11\% (Baseline CNN), 6\% (Dilated CNN), 4\% (ResNet) and 7.5 \% (Dilated ResNet). Conventional wisdom would predict the opposite due to train-test mismatch: applying gaze-decomposition during training corrects gaze estimates with a bias offset that is not used at test time.

	These results also demonstrate the advantage of dilated convolutions. For both our proposed VGG-like architecture and the deeper ResNet architecture and both with and without training with gaze decomposition, replacing max-pooling by dilated convolutions reduced error.
	We verified that the advantage was maintained in a cross-dataset setting by training on the Mixed Location Subset of NISLGaze and testing on MPIIGaze. The error for GEDDnet was $7.2^\circ$, $0.4^\circ$ lower than the $7.6^\circ$ error for the baseline CNN. These cross-dataset errors are smaller than the $7.7^\circ$ reported in~\cite{fischer2018rt}, where they trained on the RT-GENE dataset and tested on MPIIGaze.

    \subsection{Multiple Gaze Target Calibration}

	\begin{table*}[t]
		\caption{Estimation Error (mean $\pm$ SD in degree) of Multiple Gaze Target Calibration.}
        \label{tab:MPIIMGTC}
		\begin{center}
			\renewcommand{\arraystretch}{1.3}
			\begin{tabular}{c|c|ccccccc}
				\cline{1-9}
				\multirow{2}*{Face+Eye}&\multirow{2}*{Backbone}&\multicolumn{7}{c}{Number of gaze targets $T$ ($S = 1$)}\\
				\cline{3-9}
				&&1&5&9&16&32&64&128\\
				\cline{1-9}
				FC&Dilated+GD&$4.8\pm0.8$&$5.5\pm1.5$&$4.5\pm1.0$&$3.5\pm0.5$&$2.9\pm0.2$&$2.7\pm0.1$&$2.5\pm0.1$\\
				\cline{1-9}
				LA~\cite{liu2018differential}&Dilated+GD&\textit{NA}&$4.7\pm3.0$&$3.1\pm0.7$&$2.8\pm0.3$&$\mathbf{2.6}\pm0.1$&$\mathbf{2.5} \pm 0.1$&$\mathbf{2.4}\pm0.0$\\
				\cline{1-9}
				PA~\cite{zhang2019evaluation}&Dilated+GD&\textit{NA}&\textit{NA}&$9.2\pm3.5$&$3.8\pm1.3$&$2.8\pm0.3$&$2.6\pm0.1$&$\mathbf{2.4}\pm0.0$\\
				\cline{1-9}
				LP~\cite{linden2018appearance}&Dilated&$4.2\pm1.3$&$3.3\pm0.4$&$3.2\pm0.3$&$3.0\pm0.1$&$3.0\pm0.1$&$2.9\pm0.1$&$2.9\pm0.0$\\
				\cline{1-9}
				DF~\cite{liu2019differential}&Dilated&$4.2\pm1.5$&$3.2\pm0.4$&$3.0\pm0.3$&$3.0\pm0.1$&$\mathbf{2.6}\pm0.1$&$2.6\pm0.0$&$2.6\pm0.0$\\
				\cline{1-9}
				Baseline CNN&Baseline CNN&$4.3\pm0.9$&$3.6\pm0.5$&$3.5\pm0.4$&$3.4\pm0.3$&$3.4\pm0.1$&$3.3\pm0.0$&$3.3\pm0.0$\\
				\cline{1-9}
				Baseline CNN+GD&Baseline+GD&$3.7\pm0.9$&$3.1\pm0.4$&$2.9\pm0.2$&$2.9\pm0.1$&$2.8\pm0.1$&$2.8\pm0.0$&$2.8\pm0.0$\\
				\cline{1-9}
				Dilated CNN~\cite{chen2018appearance}&Dilated&$3.7\pm0.9$&$3.1\pm0.4$&$3.0\pm0.2$&$2.9\pm0.2$&$2.9\pm0.1$&$2.9\pm0.0$&$2.9\pm0.0$\\
				\cline{1-9}
				Dilated CNN+GD (ML)~\cite{chen2020offset}&Dilated+GD&$3.7\pm1.4$&$2.9\pm0.4$&$\mathbf{2.7}\pm0.2$&$\mathbf{2.7}\pm0.1$&$\mathbf{2.6}\pm0.1$&$2.6\pm0.0$&$2.6\pm0.0$\\
				\cline{1-9}
				GEDDNet (Dilated CNN+GD)&Dilated+GD&$\mathbf{3.5}\pm0.9$&$\mathbf{2.8}\pm0.4$&$\mathbf{2.7}\pm0.2$&$\mathbf{2.7}\pm0.1$&$\mathbf{2.6}\pm0.1$&$2.6\pm0.0$&$2.6\pm0.0$\\
				\cline{1-9}
				\hline
				\hline
				\multicolumn{2}{c}{Eye only}&\multicolumn{2}{l}{}&&&&&\\
				\cline{1-9}
				GRS~\cite{yu2019improving}$^*$&VGG16&5.0&4.2&4.0&&&&\\
				\cline{1-9}
				FAZE~\cite{park2019few}$^*$&DenseNet&4.7&4.0&3.9&3.8&3.8&3.7&3.7\\
				\cline{1-9}
				Dilated CNN+GD$^*$&Dilated+GD&$\mathbf{4.1}\pm0.9$&$\mathbf{3.5}\pm0.4$&$\mathbf{3.4}\pm0.3$&$\mathbf{3.3}\pm0.3$&$\mathbf{3.3}\pm0.1$&$\mathbf{3.3}\pm0.0$&$\mathbf{3.3}\pm0.0$\\
				\cline{1-9}
				\multicolumn{9}{l}{\small{$*$: The three methods at the bottom only used eye images as input, while the others used face+eye images as input.}}
			\end{tabular}
		\end{center}
	\end{table*}

	We compared the estimation error of our approach with that of other existing calibration methods: fine-tuning the last FC layer (\textbf{FC}), linear adaptation (\textbf{LA})~\cite{liu2018differential}, third order polynomial adaptation (\textbf{PA})~\cite{zhang2019evaluation}, the differential method (\textbf{DF})~\cite{liu2019differential}, fine-tuning the latent parameters (\textbf{LP})~\cite{linden2018appearance}, gaze redirection synthesis (\textbf{GRS})~\cite{yu2019improving} and \textbf{FAZE}~\cite{park2019few}. FC, LA and PA were applied to our network trained with gaze decomposition. DF and LP were re-implemented in a network with the same architecture as in Fig.~\ref{fig:Archi}. For GRS and FAZE, we used their originally published results in the same within-dataset leave-one-subject-out cross-validation. Since both GRS and FAZE only considered eye images as input, we trained a network without the face component in Fig.~\ref{fig:Archi} for fair comparison.

	For these experiments we used the MPIIGaze dataset. $T$ images from the test set were randomly selected as $\mathcal{D}_m$ ($S=P=1$). We evaluated gaze estimation error on the remaining test set images. 	Estimation errors reported were averaged over all subjects and over 5,000 calibration iterations per subject

	The results are shown in Table~\ref{tab:MPIIMGTC}. For all methods, as the number of calibration gaze targets increases, the error decreases. Our proposed method performed the best for low complexity calibration sets, outperforming other methods when the number of gaze targets was less than or equal to 32.

	The MAP estimate of the bias described in Eq.~\eqref{cali} resulted in lower estimation error than the ML estimate we proposed previously in~\cite{chen2020offset}, especially when $|\D_m|$ was small. For example, with only one calibration image, the error when using the proposed MAP estimate was $3.5$, or $0.2^\circ$ ($5.4\%$) lower than that with the ML estimate.

    When testing with calibration, training with gaze decomposition reduces error by $0.5^\circ$ to $0.6^\circ$ for the Baseline CNN and by $0.2^\circ$ to $0.3^\circ$ for the dilated CNN, in comparison to training without gaze decomposition. These results are expected, since train and test conditions are matched.
	Similar to our results without calibration, replacing max-pooling by dilated convolutions by dilated convolutions improved performance. Without gaze decomposition, the Dilated CNN outperformed the Baseline CNN by $0.4^\circ$ to $0.6^\circ$.  With gaze decomposition, the Dilated CNN outperformed the Baseline CNN by $0.2^\circ$ to $0.33^\circ$

    \subsection{Single Gaze Target Calibration}

    This section more extensively characterizes the performance of GEDDNet for Single Gaze Target Calibration. Since this is the simplest and least time consuming way of collecting calibration data, it is likely to be the one most preferred by users.

    \subsubsection{Size of calibration set}

	\begin{table*}[!t]
		\caption{Estimation Error (mean $\pm$ SD in degrees) of SGTC on MPIIGaze.}
		\begin{center}
			\renewcommand{\arraystretch}{1.3}
			\begin{tabular}{c|c|cccc}
				\cline{1-6}
				\multirow{2}*{}&\multirow{2}*{Backbone}&\multicolumn{4}{c}{Number of images per gaze target $S$}\\
				\cline{3-6}
				&&1&5&9&16\\
				\cline{1-6}
				FC&Dilated+GD&$4.8\pm0.8$&$7.1\pm1.2$&$8.0\pm1.2$&$8.4\pm1.1$\\
				\cline{1-6}
				LA~\cite{liu2018differential}&Dilated+GD&\textit{NA}&$15.1\pm2.7$&$14.9\pm2.5$&$14.3\pm2.1$\\
				\cline{1-6}
				PA~\cite{zhang2019evaluation}&Dilated+GD&\textit{NA}&\textit{NA}&$14.7\pm2.4$&$14.3\pm2.1$\\
				\cline{1-6}
				LP~\cite{linden2018appearance}&Dilated&$4.2\pm1.3$&$3.9\pm0.8$&$3.8\pm0.7$&$3.7\pm0.6$\\
				\cline{1-6}
				DF~\cite{liu2019differential}&Dilated&$4.2\pm1.5$&$3.5\pm0.7$&$3.4\pm0.5$&$3.3\pm0.4$\\
				\cline{1-6}
				Baseline CNN&Baseline CNN&$4.3\pm0.9$&$4.0\pm0.7$&$4.0\pm0.6$&$3.9\pm0.5$\\
				\cline{1-6}
				Baseline CNN+GD&Baseline CNN&$3.7\pm0.9$&$3.3\pm0.7$&$3.2\pm0.5$&$3.1\pm0.4$\\
				\cline{1-6}
				Dilated CNN~\cite{chen2018appearance}&Dilated&$3.7\pm0.9$&$3.4\pm0.6$&$3.3\pm0.5$&$3.2\pm0.4$\\
				\cline{1-6}
				Dilated CNN+GD (ML)~\cite{chen2020offset}&Dilated+GD&$3.7\pm1.4$&$3.1\pm0.6$&$\mathbf{3.0}\pm0.4$&$\mathbf{2.9}\pm0.3$\\
				\cline{1-6}
				GEDDNet (Dilated CNN+GD)&Dilated+GD&$\mathbf{3.5}\pm0.9$&$\mathbf{3.0}\pm0.5$&$\mathbf{3.0}\pm0.4$&$\mathbf{2.9}\pm0.3$\\
				\cline{1-6}
			\end{tabular}
		\end{center}
		\label{tab:MPIISGTC}
	\end{table*}

    We first evaluated the effect of the number of images used for calibration using the MPIIGaze dataset ($T=P=1$). We first randomly selected a calibration target in the 2D (yaw, pitch) gaze space. We then randomly selected $S$ images whose true gaze angles differed from the calibration target by less than $2^\circ$ as $\mathcal{D}_m$. We discarded the gaze target if fewer than $S$ images met the $2^\circ$ requirement. Our experimental results on the EYEDIAP dataset were consistent with those on the MPIIGaze dataset. We include the results and detailed analysis in Supplementary Material.

	Results are shown in Table~\ref{tab:MPIISGTC}. The GEDDnet (Dilated CNN + GD) resulted in the best performance no matter how many images were used for training. For SGTC, we do not expect the performance of more complex algorithms to eventually outperform the GEDDnet. The key advantage of those networks is that the additional parameters allow for adjustment of gaze estimates in different directions. However, if there is only a single target, there is not enough information available in the calibration dataset to exploit this additional power.

	Consistent with our previous results, replacing max-pooling with dilated convolutions improved performance. Training with gaze decomposition was better than training without.

    We used experiments with the NISLGaze dataset to predict GEDDnet's performance under conditions more similar to those in practice. Experiments with MPIIGaze may not be fully reflective of expected performance in the field, since the images in the calibration set were taken at different times and for slightly different targets. In the field, we would expect that the calibration data would be collected during a single calibration period before the start of the eye tracker use.

    To examine the performance of SGTC under these conditions, we used calibration data taken from a single video as the subject gazes at the camera with their head in the center of the image. We conducted five-fold inter-subject cross-validation, where we split the folds by subject. Since there were 21 subjects, four folds contained data from four subjects and the fifth contained data from five. All training was performed with gaze decomposition, where we trained on a ``Multiple Location'' dataset formed by the union of the MPIIGaze, the EYEDIAP and the NISLGaze (\textit{Mixed Location Subset}) datasets. Testing was performed on all data from the \textit{Evaluation Subset} of NISLGaze, except the video used for calibration.

    In the NISLGaze dataset, each video contains a images of a subject with a variety of head orientations, but the same head position. To vary head orientation diversity in the calibration set, we changed both the number of samples extracted from the video and the sample period (the time between successive samples).

    \begin{table*}
		\caption{Estimation error (mean$\pm$SD) for SGTC with varying calibration data amounts. \textbf{Random} means randomly sampling images from the calibration video; \textbf{All} means all samples from the video. \textbf{NA}: videos not long enough to collect 32 samples space by 0.5 seconds.}
		\label{table:NISLDuration}
        \begin{center}
    		\begin{tabular}{c|cccccc}
    		    \cline{1-7}
    			\diagbox{\makecell{sample\\number}}{\makecell{sample\\period}}&$0.1 s$&$0.2 s$&$0.5 s$&Random&All&\makecell{No\\Calibration}\\
    			\cline{1-7}
    			8&$3.38^\circ\pm0.10^\circ$&$3.27^\circ\pm 0.06^\circ$&$3.16^\circ\pm0.04^\circ$&$3.16^\circ \pm 0.03^\circ$&\multirow{3}*{$3.08^\circ$}&\multirow{3}*{$4.53^\circ$}\\
    			\cline{1-5}
    			16&$3.28^\circ\pm0.07^\circ$&$3.18^\circ\pm 0.04^\circ$&$3.12^\circ\pm0.02^\circ$&$3.12^\circ \pm 0.02^\circ$&&\\
    			\cline{1-5}
    			32&$3.18^\circ\pm0.04^\circ$&$3.13^\circ\pm 0.02^\circ$&NA&$3.10^\circ \pm 0.01^\circ$&&\\
    			\cline{1-7}
    		\end{tabular}
		\end{center}
	\end{table*}

	The results are shown in Table~\ref{table:NISLDuration}. For the same number of samples, increasing the time between successive samples reduces the estimation error, due to larger variations in head pose. Using 16 samples with sampling period of 0.1s resulted in an average error of $3.28^\circ$, an improvement of $1.25^\circ$ compared to the error without calibration, $4.53^\circ$. This is $86.2\%$ of the improvement when using all samples from the video for calibration. We can reduce the estimation error significantly (by $38.1\%$) using only a short (1.6s) calibration period.

    \subsubsection{Robustness to calibration target location}
    \label{sec:target location robustness}

    To evaluate the robustness of SGTC to the location of calibration target, we evaluated the performance on MPIIGaze when the calibration target was located in 24 different $5^\circ \times 5^\circ$ regions. We set $S=9$, since the errors in Table~\ref{tab:MPIISGTC} saturated beyond this point. We also calculated a lower bound $\underline{E}$ by estimating the bias of each subject from all images in the test set and evaluating on the test set. We denote the mean angular error without calibration by $\overline{E}$.

    Fig.~\ref{fig:MPII2} shows the results. The error with the calibration target at the center of the gaze range was lower than the error with the target at the boundary. However, SGTC is quite robust to the location of calibration target. The standard deviation over regions was only $0.16^\circ$.

	\begin{figure}
		\centering
		\includegraphics[width=\columnwidth]{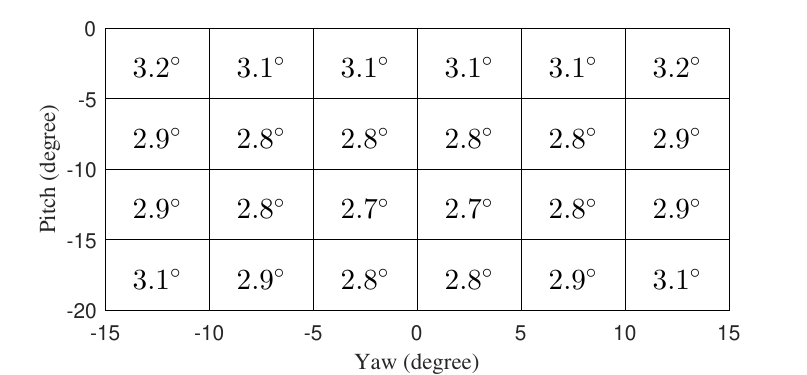}
		\caption{Within-dataset evaluation. The mean angular error on MPIIGaze for SGTC ($S=9$). The number in each box shows the mean angular error when the calibration gaze target is located inside the corresponding $5\times 5$ region. Mean errors are computed by averaging over the entire test set (except for the calibration set) and over 100 calibration gaze targets located at a $10\times10$ grid spaced by $0.5^\circ$ in yaw and pitch inside each region. $\overline{E}=4.5^\circ, \underline{E}=2.6^\circ$.}
		\label{fig:MPII2}
	\end{figure}

    \begin{figure}[t]
		\centering
		\includegraphics[width=8.5cm]{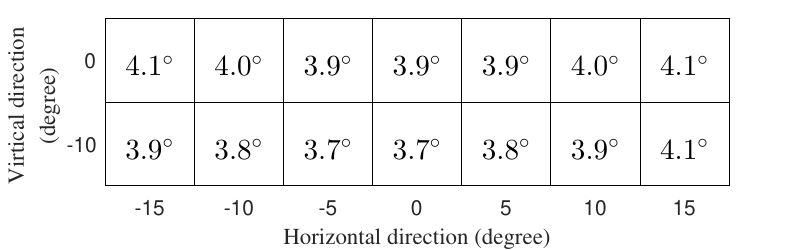}
		\caption{Cross-dataset evaluation. Mean angular error of SGTC ($S=5$) when calibrated at different gaze targets. Trained on MPIIGaze and NISLGaze. Tested on the ColumbiaGaze. $\overline{E}=4.8^\circ, \underline{E}=3.6^\circ$.}
		\label{fig:Colum}
	\end{figure}

	We confirmed that robustness to calibration target location was maintained in a cross-dataset setting. We trained GEDDNet on the combination of MPIIGaze and NISLGaze, and tested on the ColumbiaGaze dataset. We excluded the images corresponding to $10^\circ$ vertical angle during testing, since the MPIIGaze dataset mainly covers pitch angles from $-20^\circ$ to $0^\circ$.

	Fig.~\ref{fig:Colum} shows the error of SGTC when calibrating at different gaze targets. We used the five images gazing at the same gaze target but with different head poses as the calibration set ($S=5$, $T=1$). The error raged from $3.7^\circ$ to $4.1^\circ$, which was $14.6\%$ to $22.9\%$ lower than without calibration ($\overline{E}=4.8^\circ$).

	\begin{table*}
		\centering
		\caption{Estimation error (mean$\pm$SD) for SGTC at different head locations. \textbf{Overall}: samples from all nine locations. \textbf{Center}: samples from center location. \textbf{Periphery}: samples locations outside the center.}
		\label{table:NISLResLocation}
		\begin{center} \begin{tabular}{c|ccc|ccc}
		    \cline{1-7}
			&\multicolumn{3}{c|}{Multiple head location training}&\multicolumn{3}{c}{Center head location training}\\
			\cline{2-7}
			&\makecell{Overall\\Error}&\makecell{Center\\Error}&\makecell{Periphery\\Error}&\makecell{Overall\\Error}&\makecell{Center\\Error}&\makecell{Periphery\\Error}\\
			\cline{1-7}
			Without Calibration&$4.53^\circ\pm0.09^\circ$&$4.55^\circ$&$4.53^\circ\pm0.09^\circ$&$4.61^\circ\pm0.14^\circ$&$4.44^\circ$&$4.64^\circ\pm0.13^\circ$\\
			\cline{1-7}
			Center location calibration&$3.28^\circ\pm0.11^\circ$&$3.18^\circ$&$3.29^\circ\pm0.11^\circ$&$3.51^\circ\pm0.20^\circ$&$3.18^\circ$&$3.55^\circ\pm0.16^\circ$\\
			\cline{1-7}
			Matched location calibration&$3.27^\circ\pm0.11^\circ$&$3.18^\circ$&$3.28^\circ\pm0.11^\circ$&$3.52^\circ\pm0.21^\circ$&$3.18^\circ$&$3.57^\circ\pm0.18^\circ$\\
			\cline{1-7}
		\end{tabular} \end{center}
	\end{table*}

	\subsubsection{Robustness to head position}
	\label{sec:headposition}

	We ran four experiments to evaluate effect of head location following a two-by-two design, where one factor was calibration location and the other factor was training location.

	For calibration, we considered \textbf{center location calibration}, where calibration parameters were estimated only with the head in the center of the image, but used to correct estimates for images with the head at all locations, and \textbf{matched location calibration}, where separate calibration parameters were estimated for each of the nine head locations. Center location calibration is the most convenient scenario for practical deployment. Matched location calibration enables us to investigate the possibility that, similar to PCCR eye trackers, different calibration parameters would be better for different head locations.

	For training, we considered \textbf{multiple head location training}, where training data included images with the head at all nine image locations, and \textbf{center head location training}, where training data included only images with the head at the center. The dataset sizes in terms of NISLGaze for the two conditions was matched (55,275 and 55,858 images respectively), as the training set of multiple head location dataset downsampled the Evaluation Subset by a factor of 9. Center head location training is the scenario most common in current work, since most databases center the head in the image. Multiple head location training enables us to test whether normalization can fully compensate for off-center head images. We also include the MPIIGaze and the EYEDIAP datasets for training in both conditions, where most faces appear in the center of the image in these two datasets.

	Based on the results in Table~\ref{table:NISLDuration}, per-trial calibration data was obtained by taking 16 consecutive frames sampled at 10 fps while the subject gazed at the camera (1.6 seconds total), but moved their head. Calibration data was excluded from the testing data. Estimation error was averaged over 500 trials, over all subjects, and over all folds.

	Table~\ref{table:NISLResLocation} shows the estimation errors. Standard deviations were computed over face locations. We observe almost no difference in estimation error between center location calibration and matched location calibration under both training conditions.

	This suggests that calibration parameters do not vary much with head location. To confirm this, we calculated the bias of the network trained on mixed locations for each subject and for each face location, by averaging the gaze estimates from all images taken as the subject gazed at the camera from that location. We observed wide inter-subject bias variations (from $-2.7^\circ$ to $3.4^\circ$ in yaw and from $-7.5^\circ$ to $5.6^\circ$ in pitch), but very little intra-subject variation across location (SDs ranged from $0.1^\circ$ to $0.5^\circ$ for yaw and from $0.2^\circ$ to $0.7^\circ$ for pitch). A repeated measures analysis of variance (ANOVA) did not indicate a statistically significant effect of the face location on the value of biases (yaw $F(8, 160)=1.63,p=.12$; pitch $F(8, 160)=1.48,p=.17$).

	On the other hand, this robustness was not observed for the training data. The overall error increased when training only at the center head location no matter whether or what type of calibration was applied. Almost all of the the increase is due to increased error for images with the head in the periphery.

	\subsubsection{Comparison with other methods}

	To ensure that the performance gains we observed for gaze decomposition versus other calibration networks were not limited to the MPIIGaze dataset, we compared the performance of GEDDnet with LP and DF in both within-dataset and cross-dataset settings. We restricted attention to LP and DF, as they were the best two competing methods for SGTC on MPIIGaze (Table~\ref{tab:MPIISGTC}).

	Table~\ref{table:NISLComp} shows that the performance gains by GEDDNet were maintained. In the within dataset setting, we trained on the combination of MPIIGaze, EYEDIAP and the Mixed Location Subset of NISLGaze and tested on the remainder of the Evaluation Subset of NISLGaze using matched location calibration. GEDDNet outperformed DF by $0.4^\circ$ ($10.8\%$) and LP by $1.1^\circ$ ($25.0\%$). The cross-dataset setting was the same as in the previous subsection: training on MPIIGaze and NISLGaze, and testing on ColumbiaGaze. GEDDNet outperformed DF by $0.3^\circ$ ($7.1\%$) and LP by $1.2^\circ$ ($23.5\%$).

	\begin{table}
		\centering
		\renewcommand\arraystretch{1.3}
		\caption{Comparison of Calibration Methods on NISLGaze ($T=1, S=16$ / $S=5$).}
		\label{table:NISLComp}
		\begin{tabular}{ccc}
			\cline{1-3}
			Method & Within-Dataset Error & Cross-Dataset Error\\
			\hline
			LP~\cite{linden2018appearance}&$4.4^\circ$&$5.1^\circ$\\
			\cline{1-3}
			DF~\cite{liu2019differential}&$3.7^\circ$&$4.2^\circ$ \\
			\cline{1-3}
			Baseline CNN&$3.7^\circ$&$4.4^\circ$\\
			\cline{1-3}
			GEDDNet&$3.3^\circ$&$3.9^\circ$\\
			\cline{1-3}
		\end{tabular}
	\end{table}

	\subsection{ Feature Map Sensitivity}

	Our previous results consistently show that replacing max-pooling with dilated-convolutions leads to improved gaze estimation performance. We expect the dilated-convolutional layers to be more sensitive to small changes of input and to capture variations due to gaze shifts better.

	To show this directly, we performed a sensitivity analysis on the feature maps in the final convolutional layers of the right eye networks trained on NISLGaze. Sensitivity coefficients were computed on images from the ColumbiaGaze dataset, since the head pose is well controlled.

	Let $\mathbf{X}(p,t) \in \mathbb{R}^{C \times H \times W}$ denote the response of the last convolutional layer of a network in response to an input with gaze direction $(p,t)$, where $p$ and $t$ are the pan and tilt angles. The parameters $C$, $H$ and $W$ denote the number of channels, height and width of the map. Given a pair of images with gaze angles $(0,t)$ and $(\Delta p, t)$ we define the sensitivity coefficient
	\begin{equation}
        S_{\Delta p, t} = \frac{||\mathbf{X}(\Delta p,t)-\mathbf{X}(0^\circ,t)||_F}{||\mathbf{X}(0^\circ,t)||_F}
    \end{equation}
    where $||\cdot||_F$ represents the Frobenius norm. The higher the value of $S_{\Delta p, t}$, the larger the differences between the feature maps.

    \begin{figure}
		\centering
		\includegraphics[width=7cm]{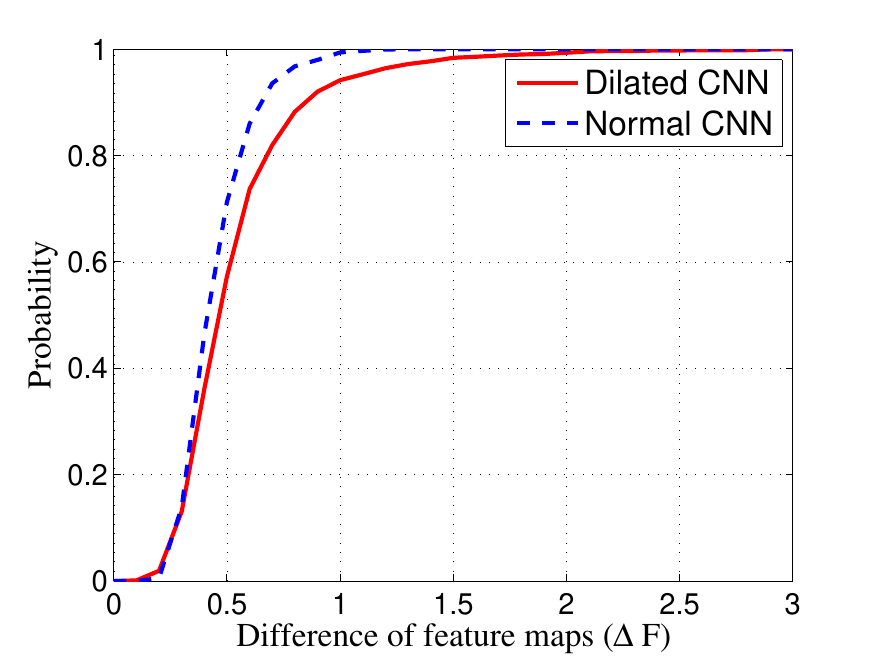}
        \caption{ Cumulative distribution function of the sensitivity coefficients computed over the last convolutional layer.}
		\label{fig:deltaF}
    \end{figure}

    The cumulative distribution functions of $S_{\Delta p,t}$ of all possible pair of images with $\Delta p \in \{-5^\circ, 5^\circ \}$ and $t \in \{ -10^\circ, 0^\circ, 10^\circ \}$ for the Baseline CNN and Dilated CNN networks
    are plotted in Fig.~\ref{fig:deltaF}. As expected, the last convolutional layer of the dilated CNN is more sensitive to image changes due to gaze shifts than the last convolutional layer of the Baseline CNN.

	\subsection{Influence of Landmark Detection}

	Although we performed data augmentation to mimic small errors in landmark estimation during training, we expect errors in gaze estimation to increase with errors in landmark estimation. We quantified this by within-dataset evaluation on the MPIIGaze dataset. We randomly perturbed the facial landmarks according to uniform distributions with maximum deviations of $3\%$, $5\%$, $7\%$ and $10\%$ of the eye size.

	The results are in Table~\ref{table:landmark}. As expected, gaze estimation error increases with landmark localization error. However, it is robust in the presence of small localization errors. When landmark localization error is $5\%$, the gaze estimation error is less than $5\%$. However, the gaze estimation error increases drastically for landmark localization errors greater than 7\%.

	\begin{table}
    	\centering
    	\caption{Mean Error under Different Landmark Disturbances.}
    	\begin{tabular}{ccccccc}
    		\cline{1-6}
    		Max. disturbance&$0\%$&$3\%$ & $5\%$ & $7\%$ & $10\%$\\
    		\cline{1-6}
    		Angular error & $4.5^\circ$&$4.6^\circ$&$4.7^\circ$&$5.2^\circ$&$6.0^\circ$\\
    		\cline{1-6}
    	\end{tabular}
    	\label{table:landmark}
    \end{table}

	\section{Theoretical Analysis}

	The improvement in test performance {\em without} calibration when the estimator was trained {\em with} gaze decomposition was surprising. Train and test conditions were mismatched: additive bias during training but no additive bias during testing. However, we observed this phenomenon consistently across all gaze estimation architectures we tried.

	To give a theoretical insight into why and under what conditions this might be expected, we analyzed a linear Gaussian model of image formation. We define $X_{i,j}\in\mathbb{R}^s$ to be the $j^\text{th}$ image of the $i^\text{th}$ subject, where $j$ ranges from 1 to $J$ (the total number of images per subject) and $i$ ranges from 1 to $I$ (the total number of subjects). We model the image formation process by
	\begin{align}
	X_{i,j} & =W_t t_{i,j}+W_c c_i+v_{i,j},
	\label{eq:gazeModel1} \\
	g_{i,j} & =t_{i,j}+b_i
	\label{eq:gazeModel2}
	\end{align}
	where $t_{i,j}\in\mathbb{R}^2$ is the direction of the optic axis, $c_i\in\mathbb{R}^p$ captures subject-dependent variations in appearance, $v_{i,j}\in\mathbb{R}^s$ is a noise term, $g_{i,j}\in\mathbb{R}^2$ is the direction of the visual axis (the gaze direction) and $b_i\in\mathbb{R}^2$ is a subject-dependent difference between the visual and optical axes. We assume that $s\geq 2+p$ and that both $W_t\in\mathbb{R}^{s\times 2}$ and $W_c\in\mathbb{R}^{s\times p}$ have full column rank. We assume that $t_{i,j}$, $c_i$, $v_{i,j}$, $b_i$ are zero-mean Gaussian random variables whose covariance matrices are denoted by $\Sigma_t$, $\Sigma_c$, $\Sigma_v$ and $\Sigma_b$, and that random variables with different values of $i$ and $j$ are independent.

	Assume a set of training data, $\Dtrain = \{ ( X_{i,j}, g_{i,j}) \mid i=1,2,\dots,\Itrain, j=1,2,\dots,\Jtrain\}$. For an estimator $\Khat$, we define the empirical mean squared error \emph{without} gaze decomposition by
	\begin{equation}
	\widehat{\MSE}_{\decbar} (\Khat) = \Tr \left\{
	\Ehat \left[ (g_{i,j}-\Khat X_{i,j}) (g_{i,j}-\Khat X_{i,j})^\top \right] \right\}
	\label{eq:MSEdecbar}
	\end{equation}
	where $\Ehat[\cdot]$ is the empirical expected value:
	\begin{equation}
	\Ehat_{}\big[A_{i,j}\big]=\frac{1}{IJ}\sum_{i=1}^{I}\sum_{j=1}^{J} A_{i,j},
	\label{eq:meanij}
	\end{equation}
	where $A_{i,j}$ is a symbolic variable. Eq.~\eqref{eq:MSEdecbar} is an empirical estimate of the mean squared error without calibration. We use the hat to indicate empirical estimates.

	For an estimator $\Khat$ and a set of subject dependent biases $\{ \hat{\mu}_i \}_{i=1}^{\Itrain}$, we define the empirical mean squared error \emph{with} gaze decomposition to be
	\begin{multline}
	\widehat{\MSE}_{\dec} (\Khat,\{ \hat{\mu}_i \}) =
	\Tr \Big\{ \Ehat \Big[ (g_{i,j}-(\Khat X_{i,j}+\hat{\mu}_i)) \times \\
	(g_{i,j}-(\Khat X_{i,j}+\hat{\mu}_i ))^\top \Big]\Big\},
	\label{eq:MSEdec}
	\end{multline}

	The optimal linear estimator that minimizes $\widehat{\MSE}_{\decbar}$ in Eq.~\eqref{eq:MSEdecbar} is referred to as the optimal empirical gaze estimator trained \emph{without} gaze decomposition. The optimal linear estimator that minimizes $\widehat{\MSE}_{\dec}$ in Eq.~\eqref{eq:MSEdec} is referred to as the optimal empirical gaze estimator trained \emph{with} gaze decomposition.

	In the Supplementary Material, we prove that

	\begin{enumerate}
	\item If variations in appearance due to gaze and due to subject are orthogonal ($W_t^\top W_c=\mathbf{0}$), then for any finite number of subjects $\Itrain$, the optimal empirical gaze estimator trained with gaze decomposition performs \emph{as well as or better than} the gaze estimator trained without gaze decomposition when the number of (gaze,image) pairs per subject $\Jtrain$ approaches infinity.

	\item If variations in appearance due to subject and gaze are not orthogonal ($W_t^\top W_c\neq\mathbf{0}$), then with enough data (when $\Itrain, \Jtrain\to\infty$) the optimal empirical gaze estimator trained with gaze decomposition performs worse than the gaze estimator trained without gaze decomposition.
	\end{enumerate}

	\noindent The second result may seem to negate the practical impact of the first, since it seems unlikely for gaze- and subject-dependent changes to be exactly orthogonal. However, this holds only in the limit as the amount of data becomes infinite. In practice, this will never be the case. We also show numerically in the Supplementary Material that if $\| W_t^\top W_c \|$ and the amount of training data ($\Itrain$) are small enough, then the optimal empirical gaze estimator trained with gaze decomposition can still outperform the gaze estimator trained without gaze decomposition.

	\section{Conclusion}

	This paper describes a systematic study of a low complexity algorithm for subject dependent calibration of appearance based gaze estimation. Our goal has been to investigate the factors affecting the performance of calibration, in order to identify the most important factors and to reduce the overhead associated with calibration.

	Based on this study, we advocate that a fairly simple algorithm, GEDDNet, which adds a subject dependent bias to the output of a subject independent gaze estimator, can be used the majority of applications. While more complex algorithms can result in better performance, the gains will not generally outweigh their cost. For example, Table~\ref{tab:MPIIMGTC} shows that more complex algorithms do not outperform this approach until the number of gaze targets is prohibitively large: greater than 32 targets. This is much larger than the nine points (commonly reduced to five for convenience) used for PCCR eye trackers. Even then, the gains are small. With 64 gaze targets, error of the best algorithm Linear Adaptation is only $0.1^\circ$ lower than our proposed GEDDNet ($2.5^\circ$ vs $2.6^\circ$).

	We argue that the use of only a single gaze target, the camera, for calibration is particularly compelling. This approach saves time, reduces cognitive load on the user, and imposes minimal hardware requirements. A camera is always required in appearance based approaches, and must always be visible to the user if it is to obtain reliable gaze estimates. This approach does not require any mechanism for directing the users gaze at towards multiple known locations in the environment. This both lowers cost and makes the system more widely applicable, e.g. to applications where gaze is directed into the environment, rather than onto a screen. Given the same number of images, the performance of GEDDNet with MGTC is only slightly better than with SGTC. Comparing Tables~\ref{tab:MPIIMGTC} and \ref{tab:MPIISGTC} for $S = T$, we find that SGTC achieved between $83\%$ to $100\%$ of the improvement over the performance without calibration (Table~\ref{table:calFree}) achieved by MGTC.

	Choosing the gaze target to be the camera is convenient and cost effective, but may seem a bit arbitrary. One may wonder whether or not there may be a better choice for a gaze target to improve the estimation of calibration parameters. Our results in Section~\ref{sec:target location robustness} indicate that GEDDNet is quite robust to the choice of the calibration target. Quite often, the camera will be centered in front of the user. Figures~\ref{fig:MPII2} and \ref{fig:Colum} show that calibration by looking at targets in the center of the field of view yield the best performance.

	Gaze decomposition performs better because of its simplicity. The small number of parameters used for calibration avoids overfitting. Despite its simplicity, it still compensates for the primary source of error, the subject-dependent bias discussed in Section~\ref{sec:target location robustness}. We suggested that this bias could be due to the offset between the visual axis and the optical axis. The former is the quantity of interest for gaze estimation, yet only the latter can be estimated from appearance. Since we did not measure this offset in our subjects, we cannot claim that the bias we observed experimentally is caused by that offset. However, our experimental results do support an internal, rather than external, origin for this bias. The biases estimated did not vary much with the head position during calibration as discussed in Section~\ref{sec:headposition}. Changes in head position did alter appearance, but did not change the bias.

	The aforementioned robustness to head position is another advantage of GEDDNet. The calibration parameters for PCCR eye trackers are very sensitive to head position, often necessitating re-calibration during long usage sessions. One hope for appearance based gaze estimators is that they will be less sensitive to head position, thus enabling fairly unconstrained gaze estimation. However, no work prior to ours investigated this hypothesis, in part due to the lack of eye gaze databases that had images of the same subject at a variety of different image positions. To address this gap, we collected the NISLGaze dataset, described here. Our experimental results with NISLGaze show GEDDNet with SGTC is not sensitive to head position during calibration. Comparing the bottom two rows in Table~\ref{table:NISLResLocation}, we see that using calibration parameters estimated with the head in the same position as during testing has no advantage over using calibration parameters extracted from images with the head in the center.

	However, our experimental results with NISLGaze reveal that changes in head position are not fully compensated by the normalization we are using~\cite{zhang2018revisiting}. Comparing the two columns of Table~\ref{table:NISLResLocation}, we see that a gaze estimator trained on faces collected from directly in front of the camera does not perform as well as a gaze estimator trained on faces originally at a variety of different position but normalized to appear as if they were taken from in front of the camera. The estimation error for faces in the periphery increases by 8\% higher when training only on center head locations. We speculate that the poorer performance for faces at the periphery may arise from nonlinear distortion and resulting differences in landmark detection. Nonlinear distortion is worse in the periphery, but is not compensated for in the normalization scheme. In a preliminary experiment, landmarks detected from the image before normalization and afterwards differ by $3\%$ on average. Although these effects are small, they may still cause significant errors because gaze shifts cause small changes in the eye images (Fig. \ref{fig:1}). This suggests several possibly promising avenue of future research, such as the development of improved normalization, adding position dependence to estimators, and adding positional variability to gaze datasets.

	Perhaps our most surprising experimental finding is that when using GEDDNet, the same estimator can be used both with and without calibration. This is certainly convenient, as it saves on weight storage costs and training, but it contradicts the prevailing wisdom that the best performance of an estimator is achieved when training and test conditions are matched. In particular, Table~\ref{table:calFree} shows that when used without calibration (setting $\hat{b}_i = 0$ in Eq.~\eqref{dec}), an estimator trained without gaze decomposition ($\hat{b}_i = 0$) performs \emph{worse} than the estimator trained with gaze decomposition. We are confident in this finding, as it was found irrespective of the deep network architecture used in the estimator (baseline CNN, dilated CNN or ResNet).

	We also provided theoretical justification for this finding. Using a linear Gaussian model of image formation, we proved that if subject-dependent and gaze-dependent changes in appearance lie in orthogonal subspaces, then the estimator with gaze decomposition always achieves lower MSE compared to the estimator without gaze decomposition. Although the actual image formation process is more complex and highly nonlinear, the linear analysis is analytically tractable and provides valuable theoretical insight into the potential reasons for this phenomenon.

	Intuitively, gaze- and subject-dependent changes may be orthogonal because gaze-dependent changes affect mostly the relative positions of the pupil in the eye, whereas subject-dependent changes will be spread over the entire face. While there may be subject-dependent changes in eye shape, these may be confounded with subject-independent changes in eye shape due to expression.

	If we assume that the observed bias is due to visual/optic axis offset, then another intuitive explanation is that the axis offset can be considered to be subject-dependent label noise. Ground-truth labels indicate the direction of the visual axis, but only the direction of the optic axis can be estimated from appearance. Thus, more reliable estimation performance would be obtained if we used the direction of the optic axis as ground truth labels. In this interpretation, gaze decomposition is a way to remove this label noise by exploiting prior knowledge that it is constant across all images of the same subject.

	Although we have focused our attention on the use of dilated convolutions to improve performance in the estimator we have used in our experiments, gaze decomposition can be applied to any network architecture for gaze estimation. For example, Tables~\ref{table:calFree}, \ref{tab:MPIIMGTC} and \ref{tab:MPIISGTC} all show that adding gaze decomposition to networks that do not use dilated convolutions improves performance both with and without calibration during deployment. However, our results do suggest that dilated convolutions are more sensitive to image changes due to gaze shifts (Figure~\ref{fig:deltaF}) and that this greater sensitivity leads to better performance in gaze estimation for all backbone networks we have tried.


	\ifCLASSOPTIONcaptionsoff
	\newpage
	\fi

	\bibliographystyle{IEEEtran}
	\bibliography{myreference}

	\begin{IEEEbiography}[{\includegraphics[width=1in,height=1.25in,clip,keepaspectratio]{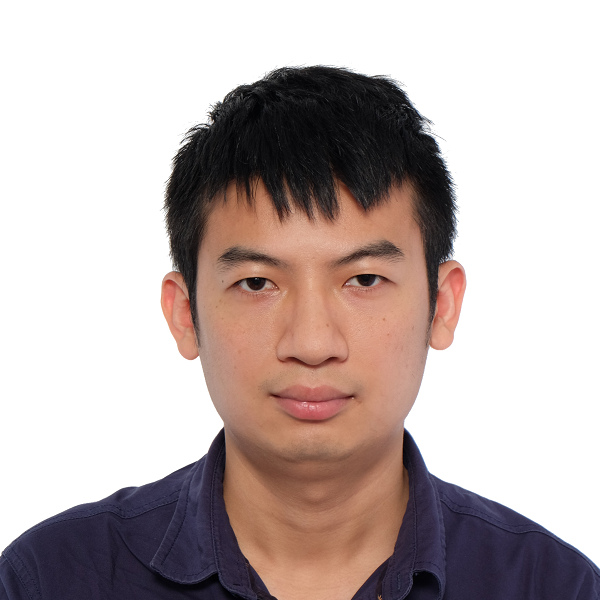}}]{Zhaokang Chen}
	received the B.S. degree from Fudan University, China, in 2014, and the Ph.D. degree from the Department of Electronic and Computer Engineering at the Hong Kong University of Science and Technology in 2020. His research interests include human computer interaction, eye tracking, computer vision and machine learning.
	\end{IEEEbiography}

	\begin{IEEEbiography}[{\includegraphics[width=1in,height=1.25in,clip,keepaspectratio]{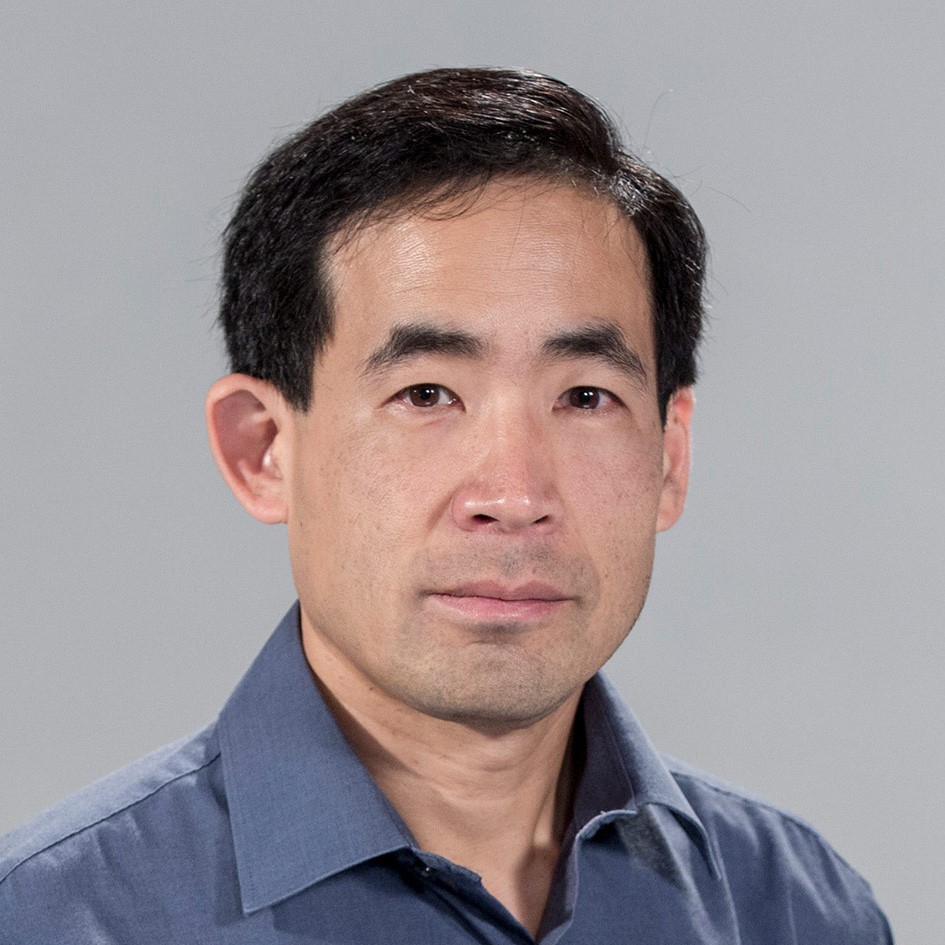}}]{Bertram E. Shi}

	(S'93-M'95-SM'00-F'01) is a Professor at the Department of Electronic and Computer Engineering at the Hong Kong University of Science and Technology. His research interests are in bio-inspired signal processing and robotics, neuromorphic engineering, computational neuroscience, developmental robotics, machine vision, image processing, and machine learning.  Prof. Shi twice served as Distinguished Lecturer for the IEEE Circuits and Systems Society.  He has also served as on the editorial boards of the IEEE Transactions on Circuits and Systems, the IEEE Transactions on Biomedical Circuits and Systems and Frontiers in Neuromorphic Engineering.  He served as Chair of the IEEE Circuits and Systems Society Technical Committee on Cellular Neural Networks and Array Computing and as General Chair and Technical Program Chair of conferences in that area.
	\end{IEEEbiography}

\end{document}


	%

	\title{Supplementary Material of ``Towards High Performance Low Complexity Calibration in Appearance Based Gaze Estimation''}
	%

	\author{Zhaokang Chen and
		Bertram E. Shi,~\IEEEmembership{Fellow,~IEEE}
    }

	\markboth{}%
	{Z. Chen and B.E. Shi: Dilation and Decomposition for Gaze Estimation}


	\maketitle

	\IEEEpeerreviewmaketitle

	\IEEEraisesectionheading{\section{Detailed Theoretical Analysis}\label{sec:theory}}
	\subsection{Theoretical Analysis}
	In our model, we define $X_{i,j}\in\mathbb{R}^s$ to be the $j^\text{th}$ image of the $i^\text{th}$ subject, where $j$ ranges from 1 to $J$ (the total number of images per subject) and $i$ ranges from 1 to $I$ (the total number of subjects). We model the image formation process by
	\begin{align}
	X_{i,j} & =W_t t_{i,j}+W_c c_i+v_{i,j},
	\label{eq:gazeModel1} \\
	g_{i,j} & =t_{i,j}+b_i
	\label{eq:gazeModel2}
	\end{align}
	where $t_{i,j}\in\mathbb{R}^2$ is the direction of the optic axis, $c_i\in\mathbb{R}^p$ captures subject-dependent variations in appearance, $v_{i,j}\in\mathbb{R}^s$ is a noise term, $g_{i,j}\in\mathbb{R}^2$ is the direction of the visual axis (the gaze direction) and $b_i\in\mathbb{R}^2$ is a subject-dependent difference between the visual and optical axes. We assume that $s\geq 2+p$ and that both $W_t\in\mathbb{R}^{s\times 2}$ and $W_c\in\mathbb{R}^{s\times p}$ have full column rank. We assume that $t_{i,j}$, $c_i$, $v_{i,j}$, $b_i$ are zero-mean Gaussian random variables whose covariance matrices are denoted by $\Sigma_t$, $\Sigma_c$, $\Sigma_v$ and $\Sigma_b$, and that random variables with different values of $i$ and $j$ are independent.

	Let
	\begin{equation}
	\hat{g}_{i,j} = \mathcal{K} X_{i,j}
	\end{equation}
	be a linear subject-independent estimator of the gaze direction $g_{i,j}$ from an image $X_{i,j}$ where $\mathcal{K} \in \mathbb{R}^{2 \times s}$ and the hat indicates the quantity is an estimate. For convenience, we will refer to a subject-independent estimator using its $\mathcal{K}$ matrix.

	We define the mean squared error without calibration to be
	\begin{equation}
	\MSE_{\cbar} (\K) = \Tr \left\{
	\E \left[ (g_{i,j}-\K X_{i,j}) (g_{i,j}-\K X_{i,j})^\top \right] \right\},
	\label{eq:MSE}
	\end{equation}
	where $\E[\cdot]$ is the expected value, $\Tr$ is the matrix trace operator and $\top$  indicates transpose.
	Since the model is linear and all random variables are Gaussian and zero mean,
	the optimal estimator without calibration is \cite{kailath2000linear}
	\begin{equation}
	\K_{\cbar} = \Sigma_{gX} \Sigma_X^{-1}
	\label{eq:Kcbar}
	\end{equation}
	where
	\begin{align}
	\Sigma_{gX} & = \E [ g_{i,j} X_{i,j}^\top ] \\
	& = \Sigma_t W_t^\top, \\
	\Sigma_{X} & = \E [ X_{i,j} X_{i,j}^\top ] \\
	& = W_t \Sigma_t W_t^\top+W_c\Sigma_cW_c^\top+\Sigma_v.
	\label{eq:sigmax}
	\end{align}

	In an ideal case where the subject-dependent terms, $c_i$ and $b_i$ are known, we can calibrate the estimator by adding a subject dependent bias
	\begin{equation}
	\hat{g}_{i,j} = \mathcal{K} X_{i,j} + \mu_i,
	\end{equation}
	where
	\begin{equation}
	\mu_i = b_i-\K W_c c_i.
	\label{eq:mu}
	\end{equation}
	We define the mean squared error with ideal calibration to be
	\begin{multline}
	\MSE_{\calib} (\K) =
	\Tr \Big\{ \E \Big[ (g_{i,j}-(\K X_{i,j}+\mu_i)) \times \\
	(g_{i,j}-(\K X_{i,j}+\mu_i ))^\top \Big] \Big\} .
	\label{eq:MSEC}
	\end{multline}
	The optimal estimator with ideal calibration is
	\begin{equation}
	\K_{\calib} = C_{gX} C_{X}^{-1},
	\label{eq:Kcalib}
	\end{equation}
	where
	\begin{align}
	C_{gX} & =\E[ (g_{i,j}-b_i) (X_{i,j} - W_c c_i)^\top ] \\
	& = \Sigma_t W_t^\top, \\
	C_{X} & =  \E [ (X_{i,j} - W_c c_i) (X_{i,j} - W_c c_i)^\top ] \\
	& = W_t \Sigma_t W_t^\top + \Sigma_v
	\end{align}
	are covariance matrices where the data is centered by subject dependent offsets.
	The difference between the case without calibration and the case with calibration is that without calibration both $c_i$ and and $v_{i,j}$ are considered noise terms added to the image, but with ideal calibration, only $v_{i,j}$ is considered a noise term.

	In practice, the constants $W_t$, $W_c$, $\Sigma_t$, $\Sigma_c$ and $\Sigma_v$ are not known. Estimates of the matrices $\K_{\cbar}$ and $\K_{\calib}$ must be estimated from training data. Nor are the random variables $b_i$ and $c_i$ known. Rather, an estimate of the bias must be obtained from user calibration data.

	Assume a set of training data, $\Dtrain = \{ ( X_{i,j}, g_{i,j}) \mid i=1,2,\dots,\Itrain, j=1,2,\dots,\Jtrain\}$. For an estimator $\Khat$, we define the empirical mean squared error \emph{without} gaze decomposition by
	\begin{equation}
	\widehat{\MSE}_{\decbar} (\Khat) = \Tr \left\{
	\Ehat \left[ (g_{i,j}-\Khat X_{i,j}) (g_{i,j}-\Khat X_{i,j})^\top \right] \right\}
	\label{eq:MSEdecbar}
	\end{equation}
	where $\Ehat[\cdot]$ is the empirical expected value:
	\begin{equation}
	\Ehat_{}\big[A_{i,j}\big]=\frac{1}{IJ}\sum_{i=1}^{I}\sum_{j=1}^{J} A_{i,j},
	\label{eq:meanij}
	\end{equation}
	where $A_{i,j}$ is a symbolic variable. Eq.~\eqref{eq:MSEdecbar} is an empirical estimate of the mean squared error without calibration. We use the hat to indicate empirical estimates.
	The optimal estimator that minimizes $\widehat{\MSE}_{\decbar}$ is given by
	\begin{equation}
	\hat{\K}_{\decbar}= \hat{\Sigma}_{gX} \hat{\Sigma}_{X}^{-1},
	\label{eq:Kdecbar}
	\end{equation}
	where
	\begin{align}
	\hat{\Sigma}_{gX} & = \Ehat [ g_{i,j} X_{i,j}^\top ], \label{eq:Kdecbar1}\\
	\hat{\Sigma}_X & = \Ehat [ X_{i,j} X_{i,j}^\top ].
	\label{eq:Kdecbar2}
	\end{align}
	By the law of large numbers, $\hat{\K}_{\decbar} \to \K_{\cbar}$ when $\Itrain,\Jtrain\to\infty$.

	For an estimator $\Khat$ and a set of subject dependent biases $\{ \hat{\mu}_i \}_{i=1}^{\Itrain}$, we define the empirical mean squared error \emph{with} gaze decomposition
	\begin{multline}
	\widehat{\MSE}_{\dec} (\Khat,\{ \hat{\mu}_i \}) =
	\Tr \Big\{ \Ehat \Big[ (g_{i,j}-(\Khat X_{i,j}+\hat{\mu}_i)) \times \\
	(g_{i,j}-(\Khat X_{i,j}+\hat{\mu}_i ))^\top \Big]\Big\}.
	\label{eq:MSEdec}
	\end{multline}
	For the optimal set of biases, this is an empirical estimate of the mean squared error with ideal calibration.
	The optimal empirical estimator and set of biases that minimize $\widehat{\MSE}_{\dec}$ are given by
	\begin{align}
	\Khat_{\dec} = & \hat{C}_{gX} \hat{C}_{X}^{-1}, \label{eq:Kdec} \\
	\hat{\mu}_i = & \Ehat_j [g_{i,j}] - \Khat_{dec} \Ehat_j [X_{i,j}], \label{eq:muhat}
	\end{align}
	where
	\begin{align}
	\hat{C}_{gX} & = \Ehat[ (g_{i,j}-\Ehat_j [g_{i,j}])
	(X_{i,j} - \Ehat_j [X_{i,j}])^\top ] \\
	& = \Ehat[ g_{i,j} X_{i,j}^\top]
	- \Ehat_i \big[ \Ehat_j [g_{i,j}] \Ehat_j [X_{i,j}^\top]\big], \label{eq:Kdec2}\\
	\hat{C}_{X} & =  \Ehat [ (X_{i,j} - \Ehat_j [X_{i,j}])
	(X_{i,j} - \Ehat_j [X_{i,j}])^\top ] \\
	& = \Ehat [X_{i,j} X_{i,j}^\top ] - \Ehat_i \big[ \Ehat_j [X_{i,j}] \Ehat_j [X_{i,j}^\top]\big], \label{eq:Kdec3}
	\end{align}
	and we define the empirical mean operators
	\begin{equation}
	\Ehat_{i}\big[A_{i,j}\big] = \frac{1}{I}\sum_{i=1}^{I}A_{i,j} \textrm{ and }
	\Ehat_{j}\big[A_{i,j}\big] = \frac{1}{J}\sum_{j=1}^{J}A_{i,j}.
	\label{eq:meanj}
	\end{equation}
	The terms $\Ehat_j [g_{i,j}]$ and $\Ehat_j [X_{i,j}]$ can be considered to be empirical estimates of $b_i$ and $\hat{W_c c_i}$, respectively.
	By the law of large numbers, $\hatKdec\to\K_{\calib}$
	and $\hat{\mu}_i \to \mu_i$ as $\Itrain,\Jtrain\to\infty$.

	By optimality, it must be true that
	\begin{equation}
	\MSE_{\cbar} \big(\K_{\cbar}\big) \leq \MSE_{\cbar} \big( \K_{\calib}\big)
	\label{eq:inequality0}
	\end{equation}
	and
	\begin{equation}
	\MSEC\big(\K_{\calib}\big) \leq \MSEC\big( \K_{\cbar} \big).
	\end{equation}
	Thus, by the law of large numbers,
	\begin{equation}
	\lim_{\Itrain,\Jtrain\to\infty} \MSE_{\cbar} \big(\Khat_{\decbar}\big) \leq
	\lim_{\Itrain,\Jtrain\to\infty} \MSE_{\cbar} \big( \Khat_{\dec}\big)
	\label{eq:inequality1}
	\end{equation}
	and
	\begin{equation}
	\lim_{\Itrain,\Jtrain\to\infty} \MSEC\big( \Khat_{\dec} \big) \leq
	\lim_{\Itrain,\Jtrain\to\infty} \MSEC\big( \Khat_{\decbar} \big).
	\label{eq:inequality1_2}
	\end{equation}
	Eq.~\eqref{eq:inequality1_2} is consistent with our experimental results, where we observe an improvement in test performance with calibration when the network is trained with gaze decomposition. However, at first glance, it appears that Eq.~\eqref{eq:inequality1} is not, since we observe better test performance without calibration when the network is trained with gaze decomposition.

	The remainder of this section will discuss how to resolve this apparent contradiction. Briefly, we prove that if variations in appearance due to gaze and due to subject are orthogonal ($W_t^\top W_c=\mathbf{0}$), then the optimal estimators without calibration and with ideal calibration are identical, i.e. $\K_{\cbar}=\K_{\calib}$ (Theorem~\ref{theoOr1}). Thus,
	\begin{equation}
	\MSE_{\cbar} \big(\K_{\cbar}\big) = \MSE_{\cbar} \big( \K_{\calib}\big)
	\end{equation}
	and
	\begin{equation}
	\lim_{\Itrain,\Jtrain\to\infty} \MSE_{\cbar} \big(\Khat_{\decbar}\big) =
	\lim_{\Itrain,\Jtrain\to\infty} \MSE_{\cbar} \big( \Khat_{\dec} \big).
	\end{equation}
	Moreover, we prove that for any finite number of subjects $\Itrain$ and as the number of (gaze,image) pairs per subject $\Jtrain$ approaches infinity, the gaze estimator trained with gaze decomposition performs \emph{as well as or better} than the gaze estimator trained without gaze decomposition. In other words, the direction of the inequality in Eq.~\eqref{eq:inequality1} is \emph{reversed} (Theorem~\ref{theoOr2}):
	\begin{equation}
	\lim_{\Jtrain\to\infty} \MSE_{\cbar} \big(\Khat_{\decbar}\big) \geq
	\lim_{\Jtrain\to\infty} \MSE_{\cbar} \big( \Khat_{\dec} \big).
	\label{eq:inequality2}
	\end{equation}

	In the case where the variations in appearance and gaze are not orthogonal ($W_t^\top W_c\neq\mathbf{0}$), then the inequalities in Eqs.~\eqref{eq:inequality0} and \eqref{eq:inequality1} are strict. Nonetheless, our numerical results with this model still show that if the sizes of $W_t^\top W_c$ and $\Itrain$ are small enough, then the inequality Eq.~\eqref{eq:inequality2} still holds.
	These results resolve the apparent contradiction with our experimental results raised above, by suggesting that in practice the variations in appearance due to gaze and due to subject are close to orthogonal.

    The following theorem shows that under certain conditions $\K_{\cbar}$ and $\K_{\calib}$ are identical.
	\begin{theorem}
		If $W_t^\top W_c=\bm{0}$ and the elements of the noise vector $v_{i,j}$ are independent and identically distributed, i.e., $\Sigma_v=\sigma_v^2 \rm{I}_k$, then $\K_{\cbar}=\K_{\calib}$.
		\label{theoOr1}
	\end{theorem}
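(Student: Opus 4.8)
The plan is to exploit the closed-form expressions derived above, namely $\K_{\cbar} = \Sigma_t W_t^\top \Sigma_X^{-1}$ with $\Sigma_X = W_t\Sigma_t W_t^\top + W_c\Sigma_c W_c^\top + \sigma_v^2\mathrm{I}$, and $\K_{\calib} = \Sigma_t W_t^\top C_X^{-1}$ with $C_X = W_t\Sigma_t W_t^\top + \sigma_v^2\mathrm{I}$, noting that the two estimators share the same ``numerator'' $\Sigma_{gX} = C_{gX} = \Sigma_t W_t^\top$ and differ only in which covariance matrix is inverted. Both $\Sigma_X$ and $C_X$ are positive definite, hence invertible, since $\sigma_v^2>0$ and $W_t\Sigma_t W_t^\top$, $W_c\Sigma_c W_c^\top$ are positive semidefinite. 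Rather than manipulate the two inverses directly, I would verify that $\K_{\calib}$ already satisfies the normal equations $\K\Sigma_X = \Sigma_{gX}$ that uniquely characterize $\K_{\cbar}$; invertibility of $\Sigma_X$ then yields $\K_{\cbar}=\K_{\calib}$.

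The crux is a single eigenvector-type identity. Because $W_t^\top W_c = \mathbf{0}$, we have $W_t\Sigma_t W_t^\top W_c = \mathbf{0}$, and hence $C_X W_c = \sigma_v^2 W_c$, so that $C_X^{-1}W_c = \sigma_v^{-2}W_c$. In words, every column of $W_c$ is an eigenvector of $C_X$ with eigenvalue $\sigma_v^2$: the subject-dependent appearance directions are invisible to the gaze-plus-noise part of the covariance. This is precisely where both hypotheses enter — orthogonality $W_t^\top W_c = \mathbf{0}$ to kill the cross term, and the isotropy $\Sigma_v = \sigma_v^2\mathrm{I}$ so that $C_X$ acts as a scalar on the $W_c$-subspace — and I expect this observation to be the only real content of the proof; everything after it is routine.

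Granting the identity, the remaining step is a short computation. Writing $\Sigma_X = C_X + W_c\Sigma_c W_c^\top$,
\begin{align*}
\K_{\calib}\,\Sigma_X
&= \Sigma_t W_t^\top C_X^{-1}\big(C_X + W_c\Sigma_c W_c^\top\big)\\
&= \Sigma_t W_t^\top + \Sigma_t W_t^\top C_X^{-1}W_c\,\Sigma_c W_c^\top\\
&= \Sigma_t W_t^\top + \sigma_v^{-2}\,\Sigma_t\big(W_t^\top W_c\big)\Sigma_c W_c^\top
 = \Sigma_t W_t^\top = \Sigma_{gX},
\end{align*}
using $W_t^\top W_c = \mathbf{0}$ in the penultimate equality. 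Since $\Sigma_X$ is invertible this forces $\K_{\calib} = \Sigma_{gX}\Sigma_X^{-1} = \K_{\cbar}$, which is the claim. (Equivalently, one could apply the Woodbury identity to $(C_X + W_c\Sigma_c W_c^\top)^{-1}$ and observe that the rank-$p$ correction term is annihilated on the left because $W_t^\top C_X^{-1}W_c = \sigma_v^{-2}W_t^\top W_c = \mathbf{0}$; the normal-equations route is preferable only in that it needs neither $\Sigma_c$ nor $\Sigma_t$ to be invertible.) The one point worth a remark is that $\sigma_v^2>0$ is genuinely needed for the inverses to exist, but this is already forced by the standing assumption $s\ge 2+p$ with $p\ge 1$, since then $W_t\Sigma_t W_t^\top$ has rank at most $2<s$ and could not by itself make $C_X$ nonsingular.
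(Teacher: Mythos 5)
Your proof is correct, and it takes a genuinely different route from the paper's. The paper computes $\Sigma_X^{-1}$ explicitly via the Woodbury identity applied to $\sigma_v^2\mathrm{I}+U\Sigma U^\top$ with $U=[W_t\ W_c]$, uses $W_t^\top W_c=\bm{0}$ to block-diagonalize $\Sigma^{-1}+U^\top U$, and then shows the $W_c$-block is annihilated when left-multiplied by $\Sigma_t W_t^\top$, finishing with a second Woodbury step to recognize $\K_{\calib}$. You instead verify that $\K_{\calib}$ satisfies the normal equations $\K\Sigma_X=\Sigma_{gX}$ characterizing $\K_{\cbar}$, with the single structural input being the eigenvector identity $C_X^{-1}W_c=\sigma_v^{-2}W_c$ (which is exactly where both hypotheses enter, as you note). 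Your computation checks out: $\K_{\calib}\Sigma_X=\Sigma_t W_t^\top+\sigma_v^{-2}\Sigma_t(W_t^\top W_c)\Sigma_c W_c^\top=\Sigma_{gX}$, and invertibility of $\Sigma_X$ (guaranteed by $\sigma_v^2>0$) closes the argument. Your version buys two things: it avoids the Woodbury identity entirely, and — as you correctly observe — it never inverts $\Sigma_t$ or $\Sigma_c$, whereas the paper's block-diagonal form uses $\Sigma_t^{-1}$ and $\Sigma_c^{-1}$ and hence tacitly assumes both are nonsingular. The paper's route, in exchange, produces the explicit closed form for $\Sigma_X^{-1}$ and the common expression for $\K_{\cbar}=\K_{\calib}$ that it reuses in the remark after the theorem (showing $\K_{\cbar}W_c=\bm{0}$) and in the proof of Theorem~\ref{theoOr2}, so the extra machinery is not wasted there.
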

	\begin{proof}
		Combining Eq. \eqref{eq:sigmax} with $\Sigma_v=\sigma_v^2 \rm{I}_k$ and the definitions $U=[W_t\quad W_c]$ and
		\begin{equation}
		    \Sigma=\begin{bmatrix}\sigma_v^{-2}\Sigma_t &\bm{0}\\\bm{0} & \sigma_v^{-2}\Sigma_c\end{bmatrix},
		\end{equation}
		we can write $\Sigma_X^{-1}=\sigma_v^{-2}(\rm{I}+U\Sigma U^\top)^{-1}$.
		Using the Woodbury matrix identity~\cite{woodbury1950inverting},
		\begin{equation}
		    \Sigma_X^{-1} = \sigma_v^{-2}(\rm{I}-U(\Sigma^{-1}+U^\top U)^{-1} U^\top).
		    \label{eq:sigmaxinv}
		\end{equation}
		Since $W_t^\top W_c=\bm{0}$,
		\begin{align}
		    (\Sigma^{-1} & + U^\top U)^{-1} \nonumber \\
		    & = \left(
		    \begin{bmatrix}
		        \sigma_v^{2}\Sigma_t^{-1} &\bm{0} \\
		        \bm{0} & \sigma_v^{2}\Sigma_c^{-1}
		    \end{bmatrix}
		    +
		    \begin{bmatrix}
		        W_t^\top \\
		        W_c^\top
		    \end{bmatrix}
		    \begin{bmatrix} W_t & W_c \end{bmatrix} \right)^{-1} \\
		    & =
            \begin{bmatrix}
		        \sigma_v^{2}\Sigma_t^{-1}+W_t^\top W_t & \bm{0} \\
		        \bm{0} & \sigma_v^{2}\Sigma_c^{-1}+W_c^\top W_c
		    \end{bmatrix}^{-1}.
		\end{align}
		Substituting into Eq. \eqref{eq:sigmaxinv},
		\begin{multline}
		    \Sigma_X^{-1}
		    = \sigma_v^{-2} \Big ( \rm{I}-W_t(\sigma_v^2\Sigma_t^{-1}+W_t^\top W_t)^{-1}W_t^\top \\
		    - W_c(\sigma_v^2\Sigma_c^{-1}+W_c^\top W_c)^{-1}W_c^\top
		    \Big).
		\end{multline}
		Substituting into Eq. \eqref{eq:Kcbar} and using $W_t^\top W_c=\bm{0}$, we obtain
		\begin{align}
		    \K_{\cbar}&=\Sigma_t W_t^\top\Big(\sigma_v^{-2}\big[\rm{I}-W_t(\sigma_v^2\Sigma_t^{-1}+W_t^\top W_t)^{-1}W_t^\top\big]\Big)\\
		    &=\Sigma_t W_t^\top \big(\Sigma_v + W_t\Sigma_t W_t^\top\big)^{-1}\\
		    &=\K_{\calib}.
		\end{align}
		The equivalence between the first and second lines can be obtained by the Woodbury matrix identity.
	\end{proof}

	The first line of the proof can be manipulated to show that both $\K_{\cbar}$ and $\K_{\calib}$ can be written as
	\begin{equation}
	    \sigma_v^{-2} \Sigma_t \left[ \rm{I} - W_t^\top W_t
	    ( \sigma_v^2 \Sigma_t^{-1} + W_t^\top W_t )^{-1} \right] W_t^\top.
	\end{equation}
	If $W_t^\top W_c = \bm{0}$, then $\K_{\cbar} W_c = \K_{\calib} W_c = \bm{0}$. In other words, the ideal estimators are insensitive to the subject dependent changes in appearance.

	\begin{theorem}
		Let $\Dtrain$ be an arbitrary training set with $\Itrain$ subjects and $\Jtrain$ samples per subject. If $W_t^\top W_c=\bm{0}$ and $\Sigma_v=\sigma_v^2 \rm{I}_k$, then
		\begin{align}
			\lim_{\Jtrain\to\infty} \MSE_{\cbar} \big(\Khat_{\decbar}\big) \geq
			\lim_{\Jtrain\to\infty} \MSE_{\cbar} \big( \Khat_{\dec} \big).\nonumber
		\end{align}
		\label{theoOr2}
	\end{theorem}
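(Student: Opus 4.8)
The plan is to evaluate the two limiting estimators $\lim_{\Jtrain\to\infty}\Khat_{\decbar}$ and $\lim_{\Jtrain\to\infty}\hatKdec$ with the number of subjects $\Itrain$ held \emph{fixed}, and then to combine this with Theorem~\ref{theoOr1} and the optimality of $\K_{\cbar}$. The structural fact that drives the argument is that the estimator trained \emph{with} gaze decomposition centers every image and gaze label by its within-subject mean before forming covariance estimates, while the estimator trained \emph{without} decomposition does not. This is exactly what makes the former consistent for $\K_{\calib}$ for \emph{any} value of $\Itrain$, while the latter remains biased whenever $\Itrain$ is finite.

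First I would take $\Jtrain\to\infty$ in $\hatKdec$. In $\hat{C}_{gX}$ and $\hat{C}_X$ (Eqs.~\eqref{eq:Kdec2}--\eqref{eq:Kdec3}) the within-subject means $\Ehat_j[g_{i,j}]$ and $\Ehat_j[X_{i,j}]$ converge almost surely, by the strong law of large numbers applied over $j$ for each fixed $i$, to $b_i$ and $W_c c_i$ respectively, since $t_{i,j}$ and $v_{i,j}$ are zero mean. Hence $g_{i,j}-\Ehat_j[g_{i,j}]\to t_{i,j}-\Ehat_j[t_{i,j}]$ and $X_{i,j}-\Ehat_j[X_{i,j}]\to W_t\big(t_{i,j}-\Ehat_j[t_{i,j}]\big)+\big(v_{i,j}-\Ehat_j[v_{i,j}]\big)$, so the subject terms $b_i$ and $W_c c_i$ cancel exactly. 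Averaging first over $j$ and then over the finitely many subjects $i$, the $t$--$v$ cross terms vanish, giving $\hat{C}_{gX}\to\Sigma_t W_t^\top$ and $\hat{C}_X\to W_t\Sigma_t W_t^\top+\sigma_v^2\mathrm{I}$; the latter is positive definite, so for $\Jtrain$ large the estimate is well defined and $\hatKdec\to\Sigma_t W_t^\top\big(W_t\Sigma_t W_t^\top+\sigma_v^2\mathrm{I}\big)^{-1}=\K_{\calib}$, independently of $\Itrain$ and of the realized subject parameters.

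Next I would take $\Jtrain\to\infty$ in $\Khat_{\decbar}=\hat{\Sigma}_{gX}\hat{\Sigma}_X^{-1}$. Since $\hat{\Sigma}_{gX}=\Ehat[g_{i,j}X_{i,j}^\top]$ and $\hat{\Sigma}_X=\Ehat[X_{i,j}X_{i,j}^\top]$ involve \emph{no} within-subject centering, the subject terms persist: expanding the products, letting $\Jtrain\to\infty$ for each fixed $i$ (again using that $t_{i,j},v_{i,j}$ are zero mean and independent), and then averaging over the $\Itrain$ subjects gives $\hat{\Sigma}_{gX}\to\Sigma_t W_t^\top+\big(\tfrac{1}{\Itrain}\sum_i b_i c_i^\top\big)W_c^\top$ and $\hat{\Sigma}_X\to W_t\Sigma_t W_t^\top+W_c\big(\tfrac{1}{\Itrain}\sum_i c_i c_i^\top\big)W_c^\top+\sigma_v^2\mathrm{I}$. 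Denote the corresponding limit of $\Khat_{\decbar}$ by $\K^\star$; its limiting Gram matrix is positive definite, so $\K^\star$ is well defined, but because $\tfrac{1}{\Itrain}\sum_i b_i c_i^\top$ need not equal $\bm{0}$ for finite $\Itrain$, in general $\K^\star\neq\K_{\cbar}$.

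Finally I would conclude. The map $\K\mapsto\MSE_{\cbar}(\K)$ in Eq.~\eqref{eq:MSE} is quadratic, hence continuous, so the limits pass inside: $\lim_{\Jtrain\to\infty}\MSE_{\cbar}(\hatKdec)=\MSE_{\cbar}(\K_{\calib})$ and $\lim_{\Jtrain\to\infty}\MSE_{\cbar}(\Khat_{\decbar})=\MSE_{\cbar}(\K^\star)$. By Theorem~\ref{theoOr1} the hypotheses $W_t^\top W_c=\bm{0}$ and $\Sigma_v=\sigma_v^2\mathrm{I}$ give $\K_{\calib}=\K_{\cbar}$, and $\K_{\cbar}$ minimizes $\MSE_{\cbar}$ by construction; therefore $\MSE_{\cbar}(\K^\star)\geq\MSE_{\cbar}(\K_{\cbar})=\MSE_{\cbar}(\K_{\calib})$, which is precisely the claimed inequality. (One need not even identify $\K^\star$ explicitly: since $\MSE_{\cbar}(\Khat_{\decbar})\geq\MSE_{\cbar}(\K_{\cbar})$ for every $\Jtrain$ by optimality, taking $\liminf$ already gives the bound.) I expect the main obstacle to be the bookkeeping in the two limit computations — in particular isolating the role of within-subject centering, and turning the heuristic law-of-large-numbers arguments into rigorous almost-sure statements by applying the strong law within each subject before averaging over the finitely many subjects; once the two limits are in hand, the conclusion follows immediately from Theorem~\ref{theoOr1}.
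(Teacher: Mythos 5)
Your proposal is correct and follows the same core strategy as the paper: hold $\Itrain$ fixed, take $\Jtrain\to\infty$ in both estimators, observe that the within-subject centering makes $\hatKdec\to\K_{\calib}$ while $\Khat_{\decbar}$ converges to a perturbed limit retaining the $\tfrac{1}{\Itrain}\sum_i b_i c_i^\top$ cross term, and then compare mean squared errors. The only place you diverge is the final step: the paper explicitly writes the limit of $\Khat_{\decbar}$ as $\K_{\calib}+\Delta\Khat$ and evaluates the MSE gap as $\Tr\{\Delta\Khat\,\Sigma_X\,\Delta\Khat^\top\}\geq 0$ via the orthogonality of the error to the data, whereas you invoke Theorem~\ref{theoOr1} to identify $\K_{\calib}=\K_{\cbar}$ as the global minimizer of $\MSE_{\cbar}$ and conclude by optimality, without ever computing $\K^\star$ or $\Delta\Khat$. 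These are the same idea in two guises (the orthogonality principle is precisely the first-order optimality condition making the quadratic expansion nonnegative), but your version is slightly more economical, and your parenthetical $\liminf$ remark shows the inequality needs essentially no computation once the limit of $\hatKdec$ is established; the paper's explicit $\Delta\Khat$ has the side benefit of quantifying the gap and showing it vanishes as $\Itrain\to\infty$.
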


	\begin{proof}
	Since $\hat{E}[A_{i,j}]=\hat{E}_i\big[\hat{E}_j[A_{i,j}]\big]$ using \eqref{eq:gazeModel1} and~\eqref{eq:gazeModel2}, we have by the law of large numbers that
	\begin{align}
	    \hat{E}[g_{i,j} X_{i,j}^\top] & \xrightarrow[\Jtrain\to\infty]{} \Sigma_t W_t^\top+\hat{E}_i[b_i c_i^\top]W_c^\top, \label{eq:Ehat1}\\
	    \hat{E}[X_{i,j} X_{i,j}^\top] &
	    \xrightarrow[\Jtrain\to\infty]{}
	    W_t\Sigma_t W_t^\top+W_c\hat{\Sigma}_c W_c^\top+\Sigma_v,
	\end{align}
	and that
	\begin{align}
	    \hat{E}_i\big[\hat{E}_j[g_{i,j}]\hat{E}_j[X_{i,j}^\top]\big] &
	    \xrightarrow[\Jtrain\to\infty]{}
	    \hat{E}_i[b_i c_i^\top]W_c^\top, \label{eq:limit3}\\
	    \hat{E}_i\big[\hat{E}_j[X_{i,j}]\hat{E}_j[X_{i,j}^\top]\big] &
	    \xrightarrow[\Jtrain\to\infty]{}
        W_c\hat{\Sigma}_c W_c^\top, \label{eq:Ehat4}
	\end{align}
	where $\hat{\Sigma}_c=\hat{E}_i[c_i c_i^\top]$.

	Substituting Eqs. \eqref{eq:Ehat1}-\eqref{eq:Ehat4} into Eqs.~\eqref{eq:Kdecbar}-\eqref{eq:Kdecbar2} and using the Woodbury matrix identify and $W_t^\top W_c=\bm{0}$,
	\begin{align}
	    \hat{\K}_{\decbar} &
	    \xrightarrow[\Jtrain\to\infty]{}
	    \big(\Sigma_t W_t^\top+\hat{E}_i[b_i c_i^\top]W_c^\top\big) \nonumber \\
	    & \qquad \times \big(W_t\Sigma_t W_t^\top+
	     W_c\hat{\Sigma}_c W_c^\top+\Sigma_v\big)^{-1} \\
	    & = \big(\Sigma_t W_t^\top+\hat{E}_i[b_i c_i^\top]W_c^\top\big) \sigma_v^{-2} \nonumber  \\
	    & \quad \times \Big( \rm{I}-W_t(\sigma_v^2\Sigma_t^{-1}+ W_t^\top W_t)^{-1}W_t^\top \nonumber \\
	    & \quad -W_c(\sigma_v^2\hat{\Sigma}_c^{-1}+W_c^\top W_c)^{-1}W_c^\top\Big)\\
	    & = \sigma_v^{-2}\Big( \Sigma_t W_t^\top + \hat{E}_i[b_i c_i^\top]W_c^\top \nonumber \\
	    & \quad - \Sigma_t W_t^\top W_t \big( \sigma_v^2\Sigma_t^{-1} + W_t^\top W_t\big)^{-1} W_t^\top \nonumber \\
	    & \quad - \hat{E}_i[b_i c_i^\top] W_c^\top W_c (\sigma_v^2\hat{\Sigma}_c^{-1}+W_c^\top W_c)^{-1}W_c^\top\Big)\\
	    &=\K_{\calib}+\Delta\hat{\K},
	\end{align}
	where
	\begin{multline}
	    \Delta\hat{\K} = \hat{E}_i[b_i c_i^\top]W_c^\top  \\
	     - \hat{E}_i[b_i c_i^\top] W_c^\top W_c (\sigma_v^2\hat{\Sigma}_c^{-1}+W_c^\top W_c)^{-1}W_c^\top.
	\end{multline}

	Substituting Eqs. \eqref{eq:Ehat1}-\eqref{eq:Ehat4} into Eqs.~\eqref{eq:Kdec}, \eqref{eq:Kdec2} and~\eqref{eq:Kdec3}, we obtain
	\begin{equation}
	    \hat{\K}_{\dec} \xrightarrow[\Jtrain\to\infty]{} \K_{\calib}.
	\end{equation}

	The limiting values of $\hat{\K}_{\dec}$ and $\hat{\K}_{\decbar}$ differ by $\Delta\hat{\K}$, which goes to zero as $\Jtrain\to\infty$, since $b_i$ and $c_i$ are independent with zero-mean. Note that $\hat{\K}$ only depends on the training set and is a constant when during testing. Thus,

	\begin{equation}
	\begin{aligned}
	    \lim_{\Jtrain\to\infty}&\MSE_{\cbar}\big(\hat{\K}_{\decbar}\big)-\lim_{\Jtrain\to\infty}\MSE_{\cbar}\big(\hat{\K}_{\dec}\big)\\
	    &=\E\Big[||g_{i,j}-\K_{\calib}X_{i,j}-\Delta\hat{\K}X_{i,j}||_2^2\\
	    &\qquad\quad-||g_{i,j}-\K_{\calib}X_{i,j}||_2^2\Big]\\
	    &=\Tr\Big\{\E\Big[-\big(g_{i,j}-\K_{\calib}X_{i,j}\big)X_{i,j}^\top\Delta\Khat^\top\\
	    &\qquad\qquad\quad -\Delta\Khat^\top X_{i,j}\big(g_{i,j}-\K_{\calib}X_{i,j}\big)^\top\\
	    &\qquad\qquad\quad +\Delta\Khat X_{i,j} X_{i,j}^\top \Delta\Khat\Big]\Big\}\\
	    &=\Tr\big\{\Delta\Khat \Sigma_X \Delta\Khat^\top\big\},
	\end{aligned}
	\end{equation}
	where the last equality holds by noting that $\K_{\calib}$ is the optimal estimator of $g_{i,j}$ for $X_{i,j}$, which implies that the estimation error is orthogonal to the data $X_{i,j}$. Since the autocorrelation matrix $\Sigma_X$ is positive semidefinite, $\Delta\Khat \Sigma_X^\top \Delta\Khat^\top\geq0$, which completes the proof.

	\end{proof}

    Referring to the proof, we can see that $\Khat_{\dec} \xrightarrow[\Jtrain \to \infty]{} \K_{\calib}$ whether or not $W_t^\top W_c = \bm{0}$. Since $\K_{\calib}$ minimizes $\MSEC$, it is always true that
	\begin{equation}
		\lim_{\Jtrain\to\infty} \MSE_{\calib} \big( \Khat_{\dec} \big)
		\leq
		\lim_{\Jtrain\to\infty} \MSE_{\calib} \big(\Khat_{\decbar}\big).
		\label{eq:MSEcal_ineq}
	\end{equation}
%

	\subsection{Simulation}

	To examine the performance $W_t^\top W_c \neq \bm{0}$, we simulated the model in Eqs.~\eqref{eq:gazeModel1} and~\eqref{eq:gazeModel2} with $s=100$, $p=10$, $\Sigma_c = 36\rm{I}$, $\Sigma_v = 4\rm{I}$ and $\Sigma_b = 4\rm{I}$, but with $t_{i,j}\sim\mathcal{U}(-30, 30)$ to better match the actual distribution of gaze in our other datasets.

	We ran simulations for four different values (0, 0.2, 0.4 and 0.6) of  $\| W_t^\top W_c \|$ where we used the max norm
	$\| A \| = \max\limits_{i,j} | a_{i,j} |$. For all four cases, the matrix $W_t$ was the same: a randomly generated matrix whose two columns were of unit length. For the case $\| W_t^\top W_c \| = 0$, the matrix $W_c$ was randomly generated with $p$ unit length columns orthogonal to the columns of $W_t$. For the other values of $\| W_t^\top W_c \|$, the matrix $W_c$ was generated by taking the original $W_c$ matrix and randomly adding or subtracting one of the columns from $W_t$ scaled by the the desired value of $\| W_t^\top W_c \|$.

	For each value of $\| W_t^\top W_c \|$, we examined the performance of the four different combinations of training ($\decbar, \dec$) and testing ($\cbar, \calib$), as $\Itrain$ varied. In each case, we ran 500 trials with $\Jtrain=50$ randomly generated image-gaze pairs per subject. For testing, we used the same test set with 300 subjects and 1,000 images per subject.

	Fig.~\ref{fig:LinearSysRes} plots the simulation results. Consistent with Theorem~\ref{theoOr1}, when $\| W_t^\top W_c \| =0$ (Fig.~\ref{fig:LinearSysRes}(a)), the errors when training without and with gaze decomposition converge to the same value as $\Itrain$ increases (compare the red and blue curves), whether or not calibration is used in testing (for both the solid and dashed curves). Testing with calibration is better than testing without calibration (compare the solid and dashed curves). Consistent with Theorem~\ref{theoOr2}, when tested without calibration (solid curves), the performance of the estimator trained with gaze decomposition (blue) is better than that of the estimator trained without (red).

	Consistent with Eq.~\eqref{eq:MSEcal_ineq}, which holds for all values of $\| W_t^\top W_c \|$, when tested with calibration (dashed curves), the performance of the estimator trained with gaze decomposition (blue) is better than that of the estimator trained without (red), independent of the value of $\Itrain$. As $\| W_t^\top W_c \|$ increases, the advantage of using the gaze decomposition increases (the dashed blue and red curves separate by more and more).

	Consistent with Eq.~\eqref{eq:inequality1}, when testing without calibration (solid curves) and for large values of $\Itrain$, the estimator trained without gaze decomposition (red) eventually performs better than the estimator trained with calibration (blue), and the performance mismatch increases with $\| W_t^\top W_c \|$. However, if the value of $\Itrain$ is small enough, the estimator with gaze decomposition performs better. The value of $\Itrain$ at which the two curves cross over increases as $\| W_t^\top W_c \|$ decreases. According to Theorem~\ref{theoOr2}, when $\| W_t^\top W_c \| =0$, the crossover is when $\Itrain = \infty$.
	\begin{figure}[!t]
		\centering
		\subfloat[$\| W_t^\top W_c \|=0$]{
			\includegraphics[width=4.5cm]{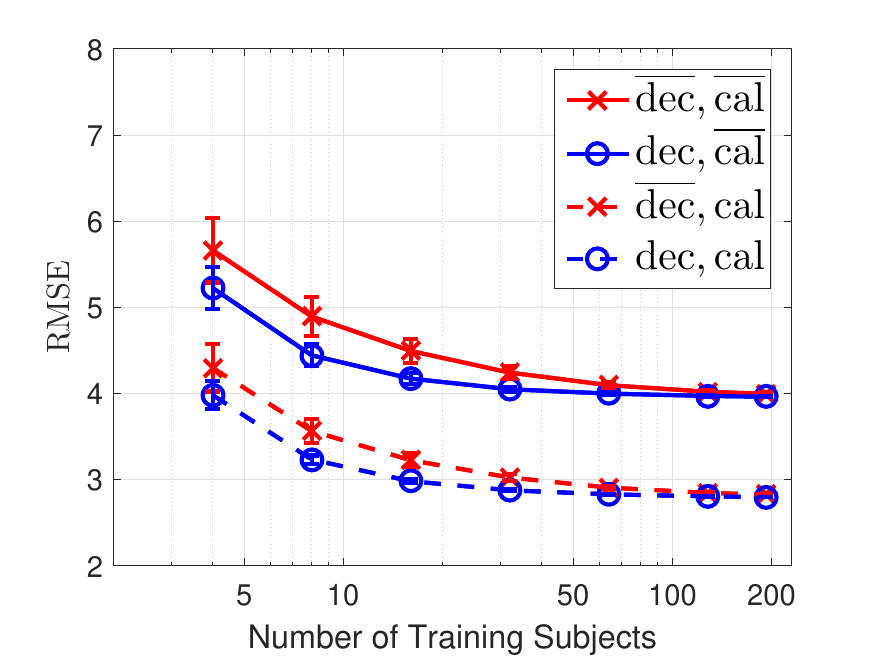}
		}
		\subfloat[$\| W_t^\top W_c \|=0.2$]{
			\includegraphics[width=4.5cm]{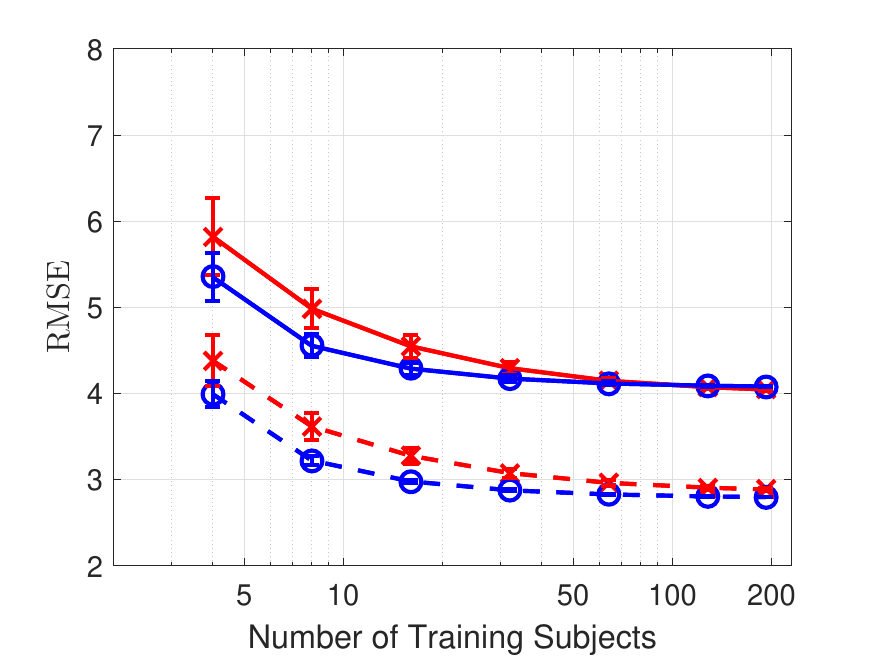}
		}\\
		\subfloat[$\| W_t^\top W_c \|=0.4$]{
			\includegraphics[width=4.5cm]{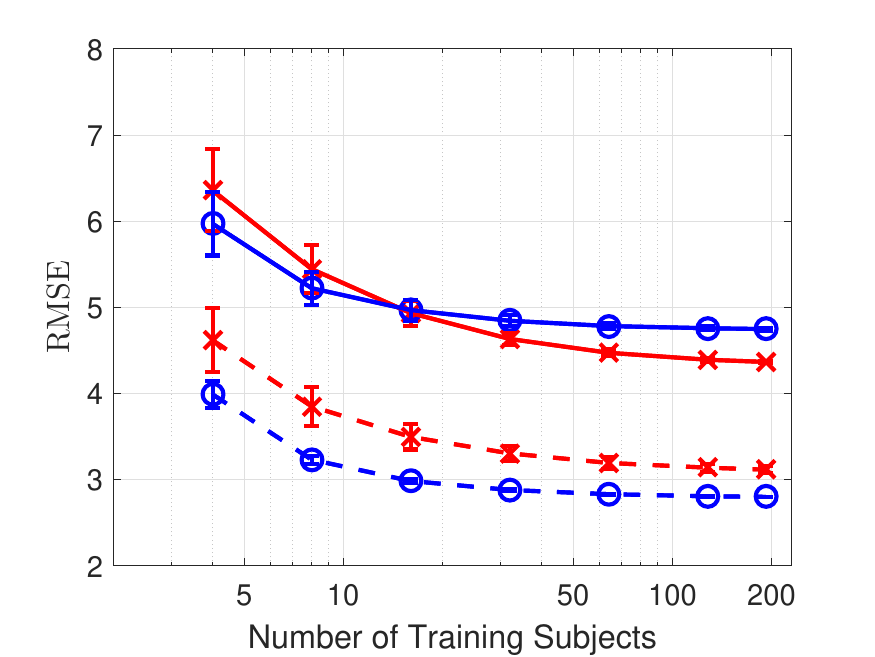}
		}
		\subfloat[$\| W_t^\top W_c \|=0.6$]{
			\includegraphics[width=4.5cm]{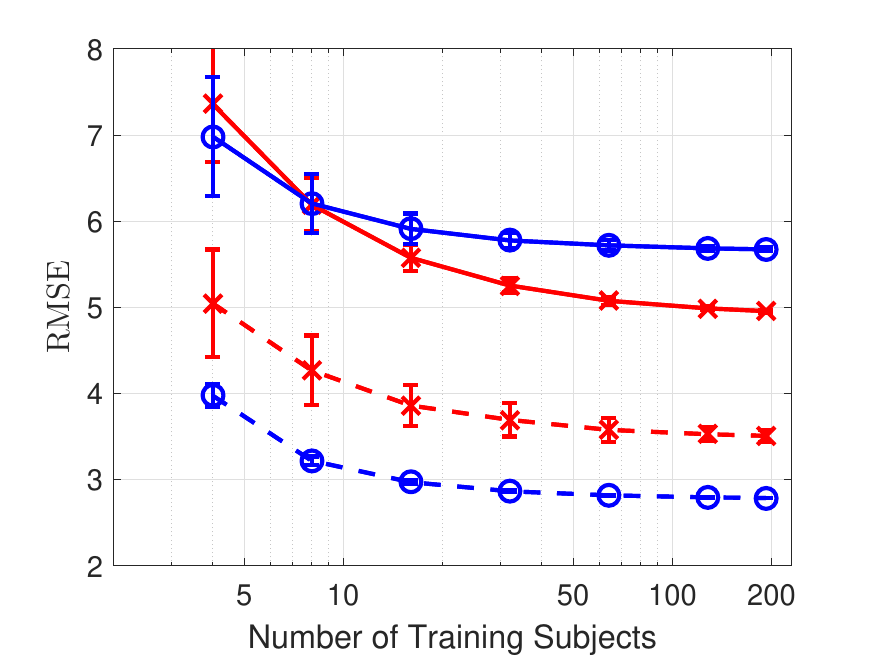}
		}
		\caption{Simulated testing results of $\hat{\K}$ and $\hatKdec$ under different values of $\| W_t^\top W_c \|$. Solid (dashed) lines indicated testing without (with) calibration. Red lines with $\times$ markers (blue lines with $\circ$ markers) indicate training without (with) gaze decomposition. RMSE means Root Mean Square Error. Error bars indicate standard deviations over 500 trials.}
		\label{fig:LinearSysRes}
	\end{figure}





	\ifCLASSOPTIONcaptionsoff
	\newpage
	\fi

	\bibliographystyle{IEEEtran}
	\bibliography{myreference}